\def\eqref#1{equation~\ref{#1}}
\def\1{\bm{1}}
\DeclareMathAlphabet{\mathsfit}{\encodingdefault}{\sfdefault}{m}{sl}
\SetMathAlphabet{\mathsfit}{bold}{\encodingdefault}{\sfdefault}{bx}{n}
\newcommand{\E}{\mathbb{E}}
\newcommand{\R}{\mathbb{R}}
\newcommand{\one}{\mathbbm{1}}
\newcommand{\N}{\mathbbm{N}}
\newcommand{\Mm}{\mathcal{M}}
\newcommand{\Xx}{\mathcal{Y}}
\newcommand{\Ff}{\mathcal{F}}
\newcommand{\Hh}{\mathcal{H}}
\newcommand{\dd}{\: \mathrm{d}}
\newtheorem{propo}{Proposition}
\theoremstyle{definition}
\newtheorem{remark}[propo]{Remark}
\newtheorem{example}[propo]{Example}
\title{Efficient pooling of predictions via kernel embeddings}
\author{\name Sam Allen \email sam.allen@stat.math.ethz.ch \\
        \addr Seminar for Statistics\\
        ETH Z{\"u}rich
        \AND
        \name David Ginsbourger \email david.ginsbourger@unibe.ch \\
        \addr Institute for Mathematical Statistics and Actuarial Science\\
        University of Bern
        \AND
        \name Johanna Ziegel \email johanna.ziegel@stat.math.ethz.ch \\
        \addr Seminar for Statistics\\
        ETH Z{\"u}rich
}
\begin{document}

\maketitle

\begin{abstract}
	Probabilistic predictions are probability distributions over the set of possible outcomes. Such predictions quantify the uncertainty in the outcome, making them essential for effective decision making. By combining multiple predictions, the information sources used to generate the predictions are pooled, often resulting in a more informative forecast. Probabilistic predictions are typically combined by linearly pooling the individual predictive distributions; this encompasses several ensemble learning techniques, for example. The weights assigned to each prediction can be estimated based on their past performance, allowing more accurate predictions to receive a higher weight. This can be achieved by finding the weights that optimise a proper scoring rule over some training data. By embedding predictions into a Reproducing Kernel Hilbert Space (RKHS), we illustrate that estimating the linear pool weights that optimise kernel-based scoring rules is a convex quadratic optimisation problem. This permits an efficient implementation of the linear pool when optimally combining predictions on arbitrary outcome domains. This result also holds for other combination strategies, and we additionally study a flexible generalisation of the linear pool that overcomes some of its theoretical limitations, whilst allowing an efficient implementation within the RKHS framework. These approaches are compared in an application to operational wind speed forecasts, where this generalisation is found to offer substantial improvements upon the traditional linear pool.
\end{abstract}

\section{Introduction}

Suppose we are interested in predicting an outcome variable $Y \in \Xx$. A point prediction for $Y$ is a single value in $\Xx$, whereas a probabilistic prediction generally takes the form of a probability distribution over $\Xx$. Probabilistic predictions therefore quantify the uncertainty in the outcome, making them essential for decision making and risk assessment. As such, they have become commonplace when issuing predictions in a variety of application domains, including economics, politics, epidemiology, and the atmospheric sciences. 

In practice, we often have access to several predictions, issued by competing models or experts, for example. By combining these predictions, we can leverage the (generally different) information sources used to construct each individual prediction, thereby producing a more informative forecast. Combination strategies, for both point-valued and probabilistic predictions, have received much attention in the forecasting literature \citep[see][for recent reviews]{WinklerEtAl2019,WangEtAl2023}. The standard approach to combine probabilistic predictions is to \emph{linearly pool} them \citep{Stone1961}. Linearly pooling predictive distributions is simple, yet satisfies several appealing properties, and often produces more accurate predictions than more elaborate combination strategies in practical applications \citep{ClemenWinkler1999,WangEtAl2023}.

The linear pool also forms the basis of many \emph{ensemble learning} techniques. Ensemble learning involves training several prediction algorithms, and then combining their output to obtain a prediction that is more robust and accurate than the output of the individual prediction algorithms, or \emph{base learners} \citep[e.g.][]{Dietterich2002}. The outputs are typically combined by averaging or linearly combining the component predictions. This encompasses bootstrap aggregating \citep[or \emph{bagging};][]{Breiman1996}, random forests \citep{Breiman2001}, and boosting \citep{FreundSchapire1996}, for example. Bayesian methods also often involve averaging a large number of candidate models, and hence can be interpreted as a linear pool \citep[Chapter 16]{HastieEtAl2017}.

The performance of the linear pool will generally depend on the weights assigned to each component prediction. While it is common to assign equal weight to all predictions, \cite{HallMitchell2007} alternatively suggest linearly pooling probabilistic predictions ``by identifying weights that deliver the most `accurate' density forecast, in a statistical sense.'' To achieve this, they propose to estimate the weights by optimising a \emph{proper scoring rule} over a set of training data. Proper scoring rules are functions that quantify the accuracy of a probabilistic prediction \citep[see e.g.][]{GneitingRaftery2007}, and optimising a proper scoring rule over a training data set therefore finds the weights that result in the most accurate pooled prediction, with accuracy measured in terms of the chosen scoring rule. \cite{HallMitchell2007} originally used the logarithmic score within this framework, \cite{ThoreyEtAl2017} instead proposed the continuous ranked probability score (CRPS), while \cite{OpschoorEtAl2017} employed weighted versions of these scoring rules that focus on particular outcomes of interest. However, numerical approaches to estimate the weights by optimising a proper scoring rule may be computationally cumbersome.

Several popular scoring rules, including the CRPS, belong to the class of \textit{kernel scores}, scoring rules defined in terms of positive definite kernels \citep{Dawid2007,GneitingRaftery2007}. A positive definite kernel on $\Xx$ is a symmetric function $k : \Xx \times \Xx \to \R$ that satisfies
\[
\sum_{i=1}^{n} \sum_{j=1}^{n} a_{i} a_{j} k(y_{i}, y_{j}) \geq 0
\]
for any $n \in \N$, $a_{1}, \dots, a_{n} \in \R$, and $y_{1}, \dots, y_{n} \in \Xx$. Every positive definite kernel $k$ generates a Hilbert space of functions, referred to as a Reproducing Kernel Hilbert Space \citep[RKHS;][]{Aronszajn1950}. Kernel methods are well-established in the machine learning literature since positive definite kernels correspond to dot products in (potentially high- or infinite-dimensional) feature spaces, thereby allowing many linear methods in Euclidean space to be generalised to an RKHS \citep[see e.g.][]{ScholkopfEtAl1998}. Since probability distributions can be converted to elements in an RKHS via their kernel mean embedding \citep{MuandetEtAl2017}, these RKHS methods can also be applied to probabilistic predictions. The connection between kernel methods and proper scoring rules is discussed in detail by \cite{SteinwartZiegel2021}.

In this paper, we demonstrate that if the weights of the linear pool are estimated by optimising a kernel score over a set of training data, then this is equivalent to a convex quadratic optimisation problem in an RKHS. The optimal weights in the linear pool can therefore be estimated efficiently and robustly. Since kernels, and therefore kernel scores, can be defined on very general outcome spaces, this framework can be applied not only to probabilistic predictions of univariate real-valued outcomes, but also when predicting outcomes in multivariate Euclidean space, spatial-temporal domains, function spaces, graph spaces, and so on. The optimisation problem can be adapted by changing the positive definite kernel, allowing personal preferences and previous results on kernels to be incorporated into the optimisation.

This key result also holds for alternative combination strategies. We study a flexible approach that generalises the linear pool by allowing different component predictions to receive different weight in different regions of the outcome space. This overcomes some well-known limitations of the linear pool when combining probabilistic predictions \citep{Hora2004,GneitingRanjan2013} by simultaneously re-calibrating the component predictions. When the predictive distributions are discrete, this approach corresponds to a linear pool of the order statistics of the discrete predictive distribution, and can therefore also be implemented efficiently within the RKHS framework. In the following, we study the linear pool applied to predictive distributions, to point predictions, and to order statistics of discrete predictive distributions. The order statistic-based approach is found to offer substantial improvements upon the traditional linear pool when applied to a case study on operational weather forecasts. The approach generates forecasts that are 10-18\% more accurate than when the linear pool weights are estimated by minimising a proper scoring rule, and 20-30\% more accurate than when the linear pool weights are assumed equal for all component predictions.

The following section introduces proper scoring rules, and discusses their connection to kernel methods. Section \ref{sec:lpool} defines the linear pool and illustrates that by embedding probabilistic predictions into an RKHS, estimating the linear pool weights by optimising a kernel score is a convex quadratic optimisation problem. Section \ref{sec:combi} discusses the relevance of this approach when combining discrete predictive distributions, and introduces a flexible generalisation of the linear pool that can similarly be implemented efficiently using kernel methods. The practical utility of the proposed approaches is illustrated in an application to weather forecasting in Section \ref{sec:app}. Section \ref{sec:disc} discusses extensions of the approach to arbitrary outcome domains, before Section \ref{sec:conclusion} concludes.

\section{Forecast evaluation using kernels}\label{sec:eval}

Here and throughout the paper, we denote realisations of $Y$ as $y \in \Xx$, and use $x$ and $X$ to denote deterministic and random elements in $\Xx$.

\subsection{Proper scoring rules}

The accuracy of probabilistic predictions can be quantified using proper scoring rules. A scoring rule is a function $S: \Ff \times \Xx \to \overline{\R}$, where $\Ff$ is a convex class of probability distributions over $\Xx$, and $\overline{\R}$ is the extended real line. Scoring rules assign a numerical score to a prediction and corresponding observation, quantifying the separation between them. The scoring rule $S$ is \textit{proper} (with respect to $\Ff$) if the expected score is minimised when the true distribution of $Y$ is issued as the prediction. That is, when $Y \sim G$,
\begin{equation}\label{eq:prop}
    \E S(G, Y) \leq \E S(F, Y)
\end{equation}
for all $F, G \in \Ff$. The scoring rule is \textit{strictly proper} if, in addition, equality in \eqref{eq:prop} holds if and only if $F = G$. We assume throughout that all expectations are finite where necessary.

When evaluating predictive distributions on $\Xx \subseteq \R$, a popular scoring rule is the \textit{continuous ranked probability score} (CRPS),
\begin{equation}\label{eq:crps}
    \text{CRPS}(F, y) = \E |X - y| - \frac{1}{2} \E |X - X^{\prime}|,
\end{equation}
where $X,X^{\prime} \sim F$ are independent \citep{MathesonWinkler1976,GneitingRaftery2007}. To evaluate multivariate predictions ($\Xx \subseteq \R^{d}$), the CRPS can be generalised to the \textit{energy score}, 
\begin{equation}\label{eq:es}
    \text{ES}(F, y) = \E \| X - y \| - \frac{1}{2} \E \| X - X^{\prime} \|,
\end{equation}
where $X, X^{\prime} \sim F$ are independent, and $\| \cdot \|$ is the Euclidean distance in $\R^{d}$ \citep{GneitingRaftery2007}. 

The CRPS and energy score are both examples of kernel scores. The kernel score associated with a positive definite kernel $k$ on $\Xx$ is 
\begin{equation}\label{eq:kernsco}
    S_{k}(F, y) = \frac{1}{2} \E k(X, X^{\prime}) + \frac{1}{2}k(y, y) -\E k(X, y),
\end{equation}
where $X, X^{\prime} \sim F$ are independent. The energy score is the kernel score associated with the kernel $k(x, x^{\prime}) = \|x\| + \|x^{\prime}\| - \|x - x^{\prime}\|$, with the CRPS a particular case when $d = 1$. In the following, we refer to this kernel as the \textit{energy kernel}. The positive definiteness of the energy kernel follows from well-known relationships between positive definite and (conditionally) negative definite kernels \citep[Lemma 2.1, page 74]{BergEtAl1984}; see also \citet[Lemma 12]{SejdinovicEtAl2013}. Not all proper scoring rules are kernel scores --- the logarithmic (or negative log-likelihood) score, for example, is not --- though the kernel score framework also encompasses many other popular scoring rules, including the Brier score \citep{Brier1950}, variogram score \citep{ScheuererHamill2015}, and the angular CRPS \citep{GrimitEtAl2006}. By transforming the kernel $k$, kernel scores can be directed towards particular outcomes, providing a means to construct weighted scoring rules \citep{AllenEtAl2023}.

\subsection{Kernel mean embeddings}

In the machine learning literature, it is common to measure the distance between probability distributions using a \textit{maximum mean discrepancy} (MMD), which corresponds to the RKHS distance between the \emph{kernel mean embeddings} of the distributions. If $F$ is a probability distribution on $\Xx$, then the kernel mean embedding of $F$ by a positive definite kernel $k$ is 
\[
\mu_{F} = \int_{\Xx} k(x, \cdotp) \dd F(x),
\]
which is a function in the RKHS of $k$, denoted here by $\Hh_{k}$. Equivalently, $\mu_{F} = \E k(X, \cdotp)$ with $X \sim F$. Kernel mean embeddings therefore convert probability distributions to elements in an RKHS; if this embedding is injective and the kernel is bounded, then the kernel $k$ is called \emph{characteristic} \citep{FukumizuEtAl2007}. A review of kernel mean embeddings can be found in \cite{MuandetEtAl2017}.

The MMD between two distributions $F$ and $G$ is then $\| \mu_{F} - \mu_{G} \|_{\Hh_{k}}$, where $\| \cdot \|_{\Hh_{k}}$ is the RKHS norm. In practice, it is common to calculate the squared MMD, which can be expressed as 
\begin{equation}\label{eq:sqmmd}
    \| \mu_{F} - \mu_{G} \|_{\Hh_{k}}^{2} = \E k(X, X^{\prime}) + \E k(Y, Y^{\prime}) - 2\E k(X, Y),
\end{equation}
where $X, X^{\prime} \sim F$ and $Y, Y^{\prime} \sim G$ are independent. This follows from the reproducing property of an RKHS, 
\[
k(x, x^{\prime}) = \langle k(x, \cdotp), k(x^{\prime}, \cdotp) \rangle_{\Hh_{k}} \quad \text{for any} \quad x, x^{\prime} \in \Xx,
\]
where $\langle \cdotp, \cdotp \rangle_{\Hh_{k}}$ is the RKHS inner product \citep{Aronszajn1950}.

By comparing equations \ref{eq:kernsco} and \ref{eq:sqmmd}, the kernel score corresponding to $k$ is simply (half) the squared MMD between the probabilistic prediction $F$ and a Dirac measure at the observation $y$, denoted $\delta_{y}$. The squared MMD at \eqref{eq:sqmmd} is equivalent to the difference in expected scores $\E S_{k}(F, Y) - \E S_{k}(G, Y)$, for $Y \sim G$. The propriety of kernel scores therefore follows from the non-negativeness of the MMD, and, for bounded kernels, a kernel score is strictly proper if and only if $k$ is characteristic, in which case the MMD is a metric \citep{SteinwartZiegel2021}. We leverage these connections between kernel scores and MMDs in the following section to propose an efficient framework with which to optimise kernel scores.

\section{Efficient linear pooling}\label{sec:lpool}

Suppose we have access to $J \in \N$ predictive distributions $F_{1}, \dots, F_{J}$ on $\Xx$. The standard approach to forecast combination is to \textit{linearly pool} the different predictions:
\begin{equation}\label{eq:lpool}
    F_{LP} = \sum_{j=1}^{J} w_{j} F_{j},
\end{equation}
where the weights $w_{j} \geq 0$ sum to 1 \citep[e.g.][]{Stone1961}. The weights quantify the relative contributions of the component predictions $F_{j}$ to the combined prediction $F_{LP}$. The linear pool is simple, interpretable, and performs competitively in many applications, and is therefore widely adopted in practice.

The linear pool prediction also exhibits several desirable theoretical properties. If all component predictions agree on the probability of an event, then the linear pool will also issue this probability. \cite{ClemenWinkler1999} refer to this as the \emph{unanimity property}. Similarly, if $F_{1}, \dots, F_{J}$ all have the same mean, then this will also be the mean of $F_{LP}$ \citep[e.g.][]{HallMitchell2007}. Moreover, when $\Xx \subseteq \R^{d}$, for $d > 1$, linearly pooling the marginal distributions of the component predictions is equivalent to taking the marginal distribution of the linear pool prediction; the linear pool is the only combination scheme that satisfies this \emph{marginalisation property} \citep{Genest1984}. 

\citet[Proposition 1]{LichtendahlEtAl2013} additionally show that for a few popular scoring rules, the score assigned to the linear pool prediction is guaranteed to be no larger than the (weighted) average score assigned to the $J$ component predictions. The following proposition generalises their result by showing that this holds for any kernel score.

\begin{propo}
    Let $S_{k}$ be a kernel score on $\Xx$, and let $F_{LP}$ denote the linear pool at \eqref{eq:lpool}. Then, for any weights $w_{1}, \dots, w_{J} \geq 0$ that sum to 1, and any $y \in \Xx$,
    \begin{equation}\label{eq:convex}
        S_{k}(F_{LP}, y) \leq \sum_{j=1}^{J} w_{j} S_{k}(F_{j}, y).
    \end{equation}
\end{propo}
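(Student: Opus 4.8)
The plan is to exploit the identification, established in Section~\ref{sec:eval}, of the kernel score with half the squared maximum mean discrepancy between the prediction and a Dirac measure at the observation. Writing $\mu_F \in \Hh_k$ for the kernel mean embedding of a distribution $F$ and noting that the embedding of $\delta_y$ is $k(y, \cdotp)$, the comparison of equations \ref{eq:kernsco} and \ref{eq:sqmmd} gives
\[
S_k(F, y) = \tfrac{1}{2} \| \mu_F - \mu_{\delta_y} \|_{\Hh_k}^2
\]
for every $F$ and every $y \in \Xx$. The whole proposition then reduces to a statement about the convexity of the squared RKHS norm.

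The key structural fact is that the kernel mean embedding is \emph{linear} in the distribution: since $F_{LP} = \sum_{j=1}^J w_j F_j$, interchanging the finite sum with the integral defining the embedding yields $\mu_{F_{LP}} = \sum_{j=1}^J w_j \mu_{F_j}$. Substituting this into the squared-norm representation and using $\sum_j w_j = 1$ to write $\mu_{\delta_y} = \sum_j w_j \mu_{\delta_y}$, the claimed inequality \eqref{eq:convex} becomes
\[
\Big\| \sum_{j=1}^J w_j v_j \Big\|_{\Hh_k}^2 \leq \sum_{j=1}^J w_j \| v_j \|_{\Hh_k}^2, \qquad v_j := \mu_{F_j} - \mu_{\delta_y},
\]
which is precisely Jensen's inequality for the convex map $v \mapsto \| v \|_{\Hh_k}^2$ against the probability weights $(w_j)$.

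Rather than merely invoking convexity, I would expand both sides via the RKHS inner product and verify the clean identity
\[
\sum_{j=1}^J w_j \| v_j \|_{\Hh_k}^2 - \Big\| \sum_{j=1}^J w_j v_j \Big\|_{\Hh_k}^2 = \frac{1}{2} \sum_{i=1}^J \sum_{j=1}^J w_i w_j \| v_i - v_j \|_{\Hh_k}^2,
\]
whose right-hand side is manifestly non-negative since each $w_i w_j \geq 0$; this both proves the inequality and exhibits the exact size of the gap. I expect no serious obstacle here: the only points requiring care are the justification of the embedding's linearity (guaranteed by the standing assumption that the relevant expectations are finite, so that $\mu_{F_j}$ and $\mu_{\delta_y}$ are well-defined elements of $\Hh_k$) and keeping track of the factor $\tfrac{1}{2}$ in the correspondence between $S_k$ and the squared MMD.
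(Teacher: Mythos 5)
Your proof is correct and rests on the same foundation as the paper's own argument: the identification $S_{k}(F, y) = \tfrac{1}{2} \| \mu_{F} - \mu_{\delta_{y}} \|_{\Hh_{k}}^{2}$, linearity of the kernel mean embedding, and convexity of the squared RKHS norm. Where you depart from the paper is the final step. The paper verifies convexity of $S_{k}$ in the forecast argument for a two-component mixture $\lambda F_{1} + (1-\lambda) F_{2}$ and lets the $J$-component case follow from that; you instead treat all $J$ components at once and replace the convexity inequality by the exact expansion
\[
\sum_{j=1}^{J} w_{j} \| v_{j} \|_{\Hh_{k}}^{2} - \Big\| \sum_{j=1}^{J} w_{j} v_{j} \Big\|_{\Hh_{k}}^{2} = \frac{1}{2} \sum_{i=1}^{J} \sum_{j=1}^{J} w_{i} w_{j} \| v_{i} - v_{j} \|_{\Hh_{k}}^{2},
\]
which is indeed a valid identity (expand the inner products and use $\sum_{j} w_{j} = 1$). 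This sharpening buys something the paper's proof does not: since $v_{i} - v_{j} = \mu_{F_{i}} - \mu_{F_{j}}$, the gap between the right- and left-hand sides of \eqref{eq:convex} equals $\tfrac{1}{4} \sum_{i,j} w_{i} w_{j} \| \mu_{F_{i}} - \mu_{F_{j}} \|_{\Hh_{k}}^{2}$, i.e.\ a weighted average of the pairwise squared MMDs between the component predictions, independent of the observation $y$. That is a concrete (partial) answer to the open question raised immediately after the proposition about how the choice of kernel affects the difference between the two sides of \eqref{eq:convex}, so it would be worth stating explicitly rather than leaving as a remark inside the proof.
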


\begin{proof}
    Clearly, \eqref{eq:convex} holds more generally for any scoring rule that is convex in the forecast argument. Convexity of kernel scores follows from linearity of kernel mean embeddings and convexity of squared norms. Let $F_{1}, F_{2}$ be predictive distributions on $\Xx$, and, given a positive definite kernel $k$ on $\Xx$, let $\mu_{F_{1}} = \E k ( X_{1}, \cdotp )$ and $\mu_{F_{2}} = \E k ( X_{2}, \cdotp )$ be the kernel mean embeddings of $F_{1}$ and $F_{2}$ into $\Hh_{k}$, where $X_{1} \sim F_{1}$, and $X_{2} \sim F_{2}$. An observation $y \in \Xx$ corresponds to the function $\mu_{\delta_{y}} = k(y, \cdotp)$ in the RKHS. For any $\lambda \in (0, 1)$, we have
    \begin{align*}
        S_{k}(\lambda F_{1} + (1 - \lambda) F_{2}, y) &= \frac{1}{2}\| \lambda \mu_{F_{1}} + (1 - \lambda) \mu_{F_{2}} - \mu_{\delta_{y}} \|_{\Hh_{k}}^{2} \\
        &= \frac{1}{2} \| \lambda (\mu_{F_{1}} - \mu_{\delta_{y}}) + (1 - \lambda) (\mu_{F_{2}} - \mu_{\delta_{y}}) \|_{\Hh_{k}}^{2} \\
        &\le \frac{\lambda}{2} \| \mu_{F_{1}} - \mu_{\delta_{y}} \|_{\Hh_{k}}^{2} + \frac{(1 - \lambda)}{2} \| \mu_{F_{2}} - \mu_{\delta_{y}} \|_{\Hh_{k}}^{2} \\
        &= \lambda S_{k}(F_{1}, y) + (1 - \lambda) S_{k}(F_{2}, y).
    \end{align*}    
\end{proof}

An open question is how the choice of kernel (and more generally the choice of scoring rule) affects the difference between the left- and right-hand sides of \eqref{eq:convex}.

The performance of the linear pool will also depend on the weights in \eqref{eq:lpool}. \cite{HallMitchell2007} propose finding the weights that optimise a proper scoring rule over a set of training data. In practice, we observe a sequence of $n$ observations $y_{i} \in \Xx$, and have access to the corresponding predictions $F_{1,i}, \dots, F_{J,i}$, for $i = 1, \dots, n$. We then want to find the weights $\mathbf{w} = (w_{1}, \dots, w_{J})^{\top} \in \R^{J}$ in \eqref{eq:lpool} that optimise the empirical score of the linear pool prediction over these $n$ observations,
\begin{equation}\label{eq:empscore}
    \hat{S}_{n}(\mathbf{w}) = \sum_{i=1}^{n} \alpha_{i} S \left( \sum_{j=1}^{J} w_{j} F_{j,i}, y_{i} \right),
\end{equation}
where $S$ is a proper scoring rule. The scaling parameters $\alpha_{1}, \dots, \alpha_{n} \geq 0$ are often all set to 1, but can also be used to emphasise particular prediction-observation pairs: in sequential prediction settings, for example, it is common to assign higher weight to more recently collected data. 

\cite{HallMitchell2007} originally employed the logarithmic score within \eqref{eq:empscore}, which corresponds to maximum likelihood estimation. Several recent studies have instead proposed estimating the weights of the linear pool by minimising the empirical CRPS \citep[see][and references therein]{BerrischZiel2023}. The CRPS at \eqref{eq:crps} is more commonly written as
\[
\text{CRPS}(F, y) = \int_{\R} (F(z) - \one\{y \leq z\})^{2} \dd z,
\]
where $\one\{ \cdot \}$ is the indicator function, and $F$ is the predictive distribution function \citep{MathesonWinkler1976}. However, from this representation, it is not immediately obvious how the empirical score at \eqref{eq:empscore} depends on the weights of the linear pool. Leveraging the kernel score representation of the CRPS at \eqref{eq:crps}, the following proposition shows that the CRPS is a quadratic function of the linear pool weights, allowing them to be estimated robustly and efficiently. More generally, Proposition \ref{prop:prop} shows that if $S$ is a kernel score, then estimating the weights in the linear pool that optimise the empirical score at \eqref{eq:empscore} is a convex quadratic optimisation problem.
    
\begin{propo}\label{prop:prop}
    For a kernel score $S_{k}$ on $\Xx$, the mapping $\mathbf{w} \mapsto \hat{S}_{n}(\mathbf{w})$ is quadratic. In particular, finding the weights in the linear pool that optimise a kernel score is a convex quadratic programming problem.
\end{propo}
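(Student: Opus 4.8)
The plan is to exploit the maximum mean discrepancy representation of the kernel score established in the previous section, together with the linearity of the kernel mean embedding, to write $\hat{S}_{n}(\mathbf{w})$ explicitly as a quadratic form in $\mathbf{w}$. First I would fix the training sample and, for each $i = 1, \dots, n$ and $j = 1, \dots, J$, abbreviate the embedding of the $j$-th component prediction at time $i$ by $\mu_{j,i} = \E k(X_{j,i}, \cdotp)$ with $X_{j,i} \sim F_{j,i}$, and write the embedding of the observation $y_{i}$ as $k(y_{i}, \cdotp)$. Since the embedding map $F \mapsto \mu_{F}$ is linear, the embedding of the pooled forecast $\sum_{j} w_{j} F_{j,i}$ is simply $\sum_{j} w_{j} \mu_{j,i}$, so the kernel score representation gives
\[
S_{k}\!\left( \sum_{j=1}^{J} w_{j} F_{j,i}, \, y_{i} \right) = \frac{1}{2} \left\| \sum_{j=1}^{J} w_{j} \mu_{j,i} - k(y_{i}, \cdotp) \right\|_{\Hh_{k}}^{2}.
\]

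Second, I would expand this squared RKHS norm and sum over $i$ weighted by the $\alpha_{i}$. Collecting terms yields $\hat{S}_{n}(\mathbf{w}) = \tfrac{1}{2} \mathbf{w}^{\top} \mathbf{A} \mathbf{w} - \mathbf{b}^{\top} \mathbf{w} + c$, with $A_{jl} = \sum_{i} \alpha_{i} \langle \mu_{j,i}, \mu_{l,i} \rangle_{\Hh_{k}}$, $b_{j} = \sum_{i} \alpha_{i} \langle \mu_{j,i}, k(y_{i}, \cdotp) \rangle_{\Hh_{k}}$, and $c = \tfrac{1}{2} \sum_{i} \alpha_{i} k(y_{i}, y_{i})$. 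By the reproducing property these inner products equal the pairwise kernel expectations $\E k(X_{j,i}, X_{l,i})$ and $\E k(X_{j,i}, y_{i})$, which the paper assumes finite; this both establishes that $\mathbf{w} \mapsto \hat{S}_{n}(\mathbf{w})$ is quadratic and shows that $\mathbf{A}$ and $\mathbf{b}$ are computable from sample kernel evaluations. To obtain convexity I would then show $\mathbf{A}$ is positive semidefinite: for any $\mathbf{v} \in \R^{J}$, $\mathbf{v}^{\top} \mathbf{A} \mathbf{v} = \sum_{i} \alpha_{i} \| \sum_{j} v_{j} \mu_{j,i} \|_{\Hh_{k}}^{2} \geq 0$, since each $\alpha_{i} \geq 0$ and each summand is a squared norm. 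Thus $\mathbf{A}$ is a nonnegative combination of Gram matrices, hence PSD, and $\hat{S}_{n}$ is convex. Finally, the constraints $w_{j} \geq 0$ and $\sum_{j} w_{j} = 1$ cut out the probability simplex, a convex polytope described by linear (in)equalities, so minimising the convex quadratic $\hat{S}_{n}$ over it is by definition a convex quadratic programming problem.

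This argument is essentially a direct computation once the MMD identity is in hand, so I do not anticipate a genuine obstacle. The only point requiring care is the bookkeeping in the expansion: keeping the cross terms $\langle \mu_{j,i}, \mu_{l,i} \rangle_{\Hh_{k}}$ consistently organised into the Gram structure, and justifying the passage from RKHS inner products to expectations of $k$ via independence of the samples drawn from the component distributions (including independent copies when $j = l$). I would also remark that positive semidefiniteness of $\mathbf{A}$ recovers the convexity-in-forecast property of the preceding proposition at the level of the empirical objective, which is the feature that makes the estimation both robust and efficiently solvable by standard quadratic programming solvers.
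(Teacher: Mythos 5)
Your proof is correct and follows essentially the same route as the paper's: embed the pooled forecast via linearity of the kernel mean embedding, write the kernel score as half the squared MMD to $\delta_{y_i}$, expand the RKHS norm using the reproducing property, and assemble the resulting kernel expectations into the quadratic form $\tfrac{1}{2}\mathbf{w}^{\top}A\mathbf{w} + \mathbf{c}^{\top}\mathbf{w}$ over the probability simplex. Your explicit verification that $A$ is positive semidefinite (via $\mathbf{v}^{\top}A\mathbf{v} = \sum_{i}\alpha_{i}\bigl\| \sum_{j} v_{j}\mu_{j,i}\bigr\|_{\Hh_{k}}^{2} \geq 0$) is a welcome addition that the paper leaves implicit when asserting convexity.
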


\begin{proof}
    For $i = 1, \dots, n$, let $X_{j, i} \sim F_{j, i}$, and let 
    \[
    \mu_{F_{LP,i}} = \sum_{j=1}^{J} w_{j} \E k ( X_{j, i}, \cdotp )
    \]
    denote the kernel mean embedding of the linear pool prediction $F_{LP,i}$. Then, $\hat{S}_{n}(\mathbf{w})$ becomes
    \begin{equation}\label{eq:mmd}
        \sum_{i=1}^{n} \alpha_{i} S_{k}(F_{LP,i}, y_{i}) = \frac{1}{2} \sum_{i=1}^{n} \alpha_{i} \| \mu_{F_{LP,i}} - \mu_{\delta_{y}} \|_{\Hh_{k}}^{2}.
    \end{equation}
    
    Since $\| h \|_{\Hh_{k}}^{2} = \langle h, h \rangle_{\Hh_{k}}$ for any $h \in \Hh_{k}$, using the linearity of inner products and the reproducing property of an RKHS, minimising \ref{eq:mmd} is equivalent to minimising
    \[
    \sum_{i=1}^{n} \alpha_{i} \left[ \sum_{j=1}^{J} \sum_{l=1}^{J} w_{j} w_{l} \E k(X_{j,i}, X_{l,i}) - 2 \sum_{j=1}^{J} w_{j} \E k(X_{j,i}, y_{i}) \right].
    \]

    Define $A \in \R^{J \times J}$ such that
    \begin{equation}\label{eq:kernelmat}
        A_{j, l} = \sum_{i=1}^{n} \alpha_{i} \E k(X_{j,i}, X_{l,i}) \quad \text{for $j,l = 1, \dots, J$},
    \end{equation}
    and define $\mathbf{c} \in \R^{J}$ such that 
    \begin{equation}\label{eq:kernelvec}
        c_{j} = - \sum_{i=1}^{n} \alpha_{i} \E k(X_{j,i}, y_{i}) \quad \text{for $j = 1, \dots, J$}.
    \end{equation}
    Then, \eqref{eq:mmd} can be written in matrix notation as $\mathbf{w}^{\top} A \mathbf{w} + 2\mathbf{c}^{\top} \mathbf{w}$.

    Hence, optimising the weights in the linear pool is a convex quadratic optimisation problem of the form
    \begin{equation}\label{eq:cqop}
        \text{minimise} \quad \frac{1}{2} \mathbf{w}^{\top} A \mathbf{w} + \mathbf{c}^{\top} \mathbf{w}
    \end{equation}
    subject to $\mathbf{1}^{\top} \mathbf{w} = 1$ and $0 \preceq \mathbf{w}$, that is, $w_{j} \geq 0$ for $j = 1, \dots, J$.
\end{proof}

By employing the energy kernel within this framework (with $d = 1$), we can efficiently estimate the weights in the linear pool that optimise the CRPS. But Proposition \ref{prop:prop} holds more generally for any positive definite kernel on $\Xx$, and the kernel could also be chosen to emphasise particular aspects of the predictive distribution. \cite{OpschoorEtAl2017} proposed estimating the linear pool weights by optimising a weighted version of the CRPS that assigns more weight to particular outcomes \citep[see][]{GneitingRanjan2011}. \cite{AllenEtAl2023} demonstrate that popular weighted versions of kernel scores are generally themselves kernel scores, so this approach also falls into the scope of Proposition \ref{prop:prop}. This provides an example of how the kernel can be chosen to incorporate personal beliefs and information into the optimisation problem.

While linear pooling is most commonly studied when $\Xx \subseteq \R$, Proposition \ref{prop:prop} makes no assumptions on the outcome space except that a positive definite kernel exists on this domain. This provides the potential for efficient forecast combination strategies on arbitrary outcome domains. For example, the weights in a multivariate linear pool could be estimated by employing the energy kernel (with $d > 1$) within the framework of Proposition \ref{prop:prop}, which is equivalent to optimising the average energy score over some training data. Well known results on kernels could also be leveraged to linearly pool predictions made on other domains, including function spaces \citep{WynneDuncan2022}, graph spaces \citep{KriegeEtAl2020}, and Banach spaces \citep{ZiegelEtAl2024}.

Regardless of the domain, by converting optimum score estimation to a convex quadratic optimisation problem, it can be performed efficiently using existing programming software; in the following, we use the \texttt{kernlab} package in \texttt{R} \citep{KaratzoglouEtAl2004}.

\section{Efficient pooling of discrete predictive distributions}\label{sec:combi}

\subsection{Linearly pooling discrete predictive distributions}

In theory, Proposition \ref{prop:prop} holds regardless of the form of the component predictive distributions. In practice, we require the kernel mean embeddings associated with each prediction in order to construct the kernel matrix $A$ and the vector $\mathbf{c}$ in \eqref{eq:cqop}. While this may not be possible for certain kernels and parametric families of distributions, it becomes trivial when the predictions $F_{1,i}, \dots, F_{J,i}$ take the form of a finite sample, as is commonly the case in practice. We refer to such predictions as \textit{discrete predictive distributions}. Discrete predictive distributions are also often referred to as ensemble forecasts \citep[e.g.][]{LeutbecherPalmer2008}, but we avoid this terminology to avoid confusion with ensemble learning methods. 

A discrete predictive distribution can be interpreted as a sample of point predictions, $x^{1}, \dots, x^{M} \in \Xx$, with the empirical distribution of the sample defining a probabilistic prediction. Discrete predictive distributions can be obtained, for example, from Markov chain Monte Carlo (MCMC) output, generative machine learning models, Bayesian neural networks, or by aggregating point predictions generated by several competing methods. They are ubiquitous in high-dimensional prediction settings, where it is difficult to specify a complete predictive distribution, and in domains where predictions are generated by multiple runs of computationally expensive physical models, such as in weather and climate forecasting and oceanography. 

Suppose that $F_{j,i}$ is a discrete predictive distribution comprised of $M_{j}$ sample members, $x_{j,i}^{1}, \dots, x_{j,i}^{M_{j}} \in \Xx$, for $i = 1, \dots, n$, $j = 1, \dots, J$, where the subscript $j$ refers to the component prediction, $i$ refers to time, and the superscript denotes the member of the sample. That is,
\begin{equation}\label{eq:ensemble}
    F_{j,i} = \frac{1}{M_{j}} \sum_{m=1}^{M_{j}} \delta_{x_{j,i}^{m}}.
\end{equation}
Given a positive definite kernel $k$ on $\Xx$, the kernel mean embedding of the discrete predictive distribution $F_{j,i}$ is simply $\mu_{F_{j,i}} = \sum_{m=1}^{M_{j}} k(x_{j,i}^{m}, \cdotp)/M_{j}$. Hence, by Proposition \ref{prop:prop}, linearly combining discrete predictive distributions can be performed efficiently by computing the kernel matrix $A$ in \eqref{eq:kernelmat} and the vector $\mathbf{c}$ in \eqref{eq:kernelvec}, and solving the associated quadratic optimisation problem at \eqref{eq:cqop}.

\begin{example}[Linearly pooling discrete predictive distributions]\label{ex:lpool}
    The traditional linear pool of the $J$ discrete predictive distributions $F_{j,i}$ is the predictive distribution
    \[
    F_{LP,i} = \sum_{j=1}^{J} \frac{w_{j}}{M_{j}} \sum_{m=1}^{M_{j}} \delta_{x_{j,i}^{m}},
    \]
    for $i = 1, \dots, n$. In this case, the kernel matrix in \eqref{eq:kernelmat} has entries
    \[
    A_{j, l} = \sum_{i=1}^{n} \frac{\alpha_{i}}{M_{j}M_{l}} \sum_{m = 1}^{M_{j}} \sum_{m^{\prime} = 1}^{M_{l}} k(x_{j,i}^{m}, x_{l,i}^{m^{\prime}}) \quad \text{for $j,l = 1, \dots, J$},
    \]
    with the vector $\mathbf{c}$ such that
    \[
    c_{j} = - \sum_{i=1}^{n} \frac{\alpha_{i}}{M_{j}} \sum_{m = 1}^{M_{j}} k(x_{j,i}^{m}, y_{i}) \quad \text{for $j = 1, \dots, J$}.
    \]
    These terms can easily be calculated for any discrete predictive distribution and positive definite kernel. Plugging them into \eqref{eq:cqop} then results in an efficient approach to estimate the optimal weights assigned to each component prediction.
\end{example}

In \eqref{eq:ensemble}, it is assumed that each member of the sample in the discrete predictive distribution is equally weighted. However, if the members are not exchangeable, then a different weight could also be assigned to different members of the sample. These weights should reflect the relative performance of each sample member, and can again be estimated by optimising a proper scoring rule over a set of training data. This provides a means to obtain a probabilistic prediction by combining point predictions.

\begin{example}[Linearly pooling point predictions]\label{ex:wem}
    Assigning weight to each member of the sample yields the predictive distribution
    \begin{equation}\label{eq:combi}
        F_{SM,i} = \sum_{j=1}^{J} \sum_{m=1}^{M_{j}} w_{j,m} \delta_{x_{j, i}^{m}},
    \end{equation}
    for $i = 1, \dots, n$, where the weights $w_{j, m} \geq 0$ are such that $\sum_{j=1}^{J}\sum_{m=1}^{M_{j}} w_{j,m} = 1$. The relative contribution of each prediction $F_{j}$ can be quantified using the cumulative weight assigned to the members of this sample, $\sum_{m=1}^{M_{j}} w_{j, m}$. This nests the linear pool as a particular case when $w_{j,1} = \dots = w_{j, M_{j}}$ for all $j = 1, \dots, J$.

    This approach is equivalent to treating each member of the sample as an individual point prediction, and linearly pooling these $M = M_{1} + \dots + M_{J}$ predictions. In particular, \eqref{eq:combi} can alternatively be written as 
    \begin{equation*}
        F_{SM,i} = \sum_{j=1}^{M} \tilde{w}_{j} \delta_{\tilde{x}_{j, i}},
    \end{equation*}
    where $\tilde{x}_{j, i}$ is the $j$-th element in $x_{1,i}^{1}, \dots, x_{1,i}^{M_{1}}, x_{2,i}^{1}, \dots, x_{2,i}^{M_{2}}, \dots, x_{J,i}^{1}, \dots, x_{J,i}^{M_{J}}$, and $\tilde{w}_{j}$ is similarly the $j$-th element in $w_{1,1}, \dots, w_{1,M_{1}}, \dots, w_{J,1}, \dots, w_{J,M_{J}}$. In this case, the kernel matrix in \eqref{eq:kernelmat} can be written as a matrix of dimension $M \times M$ with entries $A_{j,l} = \sum_{i=1}^{n} \alpha_{i} k(\tilde{x}_{j,i}, \tilde{x}_{l,i})$ for $j,l = 1, \dots, M$, while the vector $\mathbf{c}$ contains the values $c_{j} = - \sum_{i=1}^{n} \alpha_{i} k(\tilde{x}_{j,i}, y_{i})$ for $j = 1, \dots, M$, and similarly has length $M$. Plugging these into \eqref{eq:cqop} returns the optimum weight assigned to each individual sample member.
\end{example}

While \eqref{eq:combi} is a linear combination of the individual sample members, it is not a linear combination of the discrete predictive distributions; by weighting the sample members, the shape of the predictive distributions can change. Hence, the RKHS framework of Proposition \ref{prop:prop} can also be used to perform efficient non-linear combinations of the predictive distributions.

\subsection{Linearly pooling order statistics}

By assigning weight to different sample members, Example \ref{ex:wem} implicitly assumes that the different sample members (or the processes underlying them) exhibit different predictive performance. If the sample members are exchangeable, then the weights assigned to them should be equal. In this case, the resulting combination in Example \ref{ex:wem} reverts to the linear pool of the component predictions. As an alternative, when $\Xx \subseteq \R$, we propose assigning different weights to the order statistics of the discrete predictive distributions.

\begin{example}[Linearly pooling order statistics]\label{ex:wos}
    Let $\Xx \subseteq \R$. For $i = 1, \dots, n$, $j = 1, \dots, J$, and $m = 1, \dots, M_{j}$, let $x_{j, i}^{(m)}$ denote the $m$-th order statistic of the discrete predictive distribution $x_{j, i}^{1}, \dots, x_{j, i}^{M_{j}}$. That is, $x_{j, i}^{(1)} \leq x_{j, i}^{(2)} \leq \dots \leq x_{j, i}^{(M_{j})}$ for all $i, j$. Consider the predictive distribution
    \begin{equation}\label{eq:combi2}
        F_{OS,i} = \sum_{j=1}^{J} \sum_{m=1}^{M_{j}} w_{j, m} \delta_{x_{j, i}^{(m)}},
    \end{equation}
    where the weights $w_{j, m} \geq 0$ are such that $\sum_{j=1}^{J}\sum_{m=1}^{M_{j}} w_{j,m} = 1$.
    
    For fixed $i$, this approach resembles \eqref{eq:combi}, in that it constitutes a weighting of each sample member, and the relative contribution of the discrete predictive distribution $F_{j}$ to the combination can again be quantified using $\sum_{m=1}^{M_{j}}w_{j,m}$. However, the weights in \eqref{eq:combi2} are estimated for each order statistic, rather than for each individual sample member. Since it is generally unlikely that the same sample member will always correspond to the same order statistic (especially if the members are exchangeable), this can yield very different predictive distributions. In particular, \eqref{eq:combi2} treats the sample as a probabilistic predictive distribution, rather than treating each member of the sample as an individual point prediction. The weights can then transform the predictive distributions, providing a means to re-calibrate the pooled distribution.

    As before, \eqref{eq:combi2} is a non-linear combination of the discrete predictive distributions, but a linear combination of the individual sample members. Hence, the kernel matrix $A$ and the vector $\mathbf{c}$ can be constructed similarly to as in Example \ref{ex:wem}, but with $x_{j,i}^{m}$ replaced with $x_{j,i}^{(m)}$, and this can therefore also be implemented efficiently using the results from Proposition \ref{prop:prop}.
\end{example}

More generally, given $J$ predictive distributions $F_1,\dots,F_J$ (not necessarily discrete), and dropping the subscript $i$ for ease of presentation, this approach corresponds to the combination strategy
\begin{equation}\label{eq:combi3}
    F_{OS}(B) = \sum_{j=1}^{J} \int_B w_{j} \dd F_{j}, \quad B \subseteq \Xx,
\end{equation}
where $w_{j} : \Xx \to \R_{\geq 0}$ are non-negative weight functions, with the requirement that $\sum_{j=1}^{J} \int_{\Xx} w_{j} \dd F_{j} = 1$. The relative contribution of each component distribution to the combination can be quantified using $\int_{\Xx} w_{j} \dd F_{j}$. If the predictive distributions $F_{j}$ all have a continuous density function $f_{j}$, then \eqref{eq:combi3} is equivalent to a point-wise linear combination of the component densities. That is,
\begin{equation}\label{eq:combi4}
f_{OS}(x) = \sum_{j=1}^{J} w_{j}(x) f_{j}(x), \quad x \in \Xx,
\end{equation}
where $f_{OS}$ is the density function associated with $F_{OS}$. Hence, while Example \ref{ex:wos} restricts attention to $\Xx \subseteq \R$, in theory, this combination formula can be applied on any domain.

The linear pool is again recovered when all weight functions are constant, and the combination strategy in \eqref{eq:combi3} therefore provides a flexible generalisation of the traditional linear pool. Importantly, this generalisation overcomes some theoretical limitations of the traditional linear pool in \eqref{eq:lpool}: when $\Xx \subseteq \R$, \cite{Hora2004} demonstrates that if the component predictions $F_{j}$ are all calibrated, in the sense of established notions of probabilistic forecast calibration, then the linear pool will necessarily be over-dispersive and therefore miscalibrated \citep[see also][Theorem 3.1]{GneitingRanjan2013}. This trivially extends to the marginals of the linear pool prediction when $\Xx \subseteq \R^{d}$.

On the other hand, if the component predictions are all calibrated, then it is also possible that the combined prediction in \eqref{eq:combi3} is calibrated. In particular, the following proposition shows that for continuous predictive distributions, the combination formula in \eqref{eq:combi3} satisfies a desirable universality property, in that it can reproduce any continuous probability distribution whose support is the union of the supports of the component distributions. One immediate consequence of this result is that the combination strategy is \emph{(exchangeably) flexibly dispersive} in the sense of \cite{GneitingRanjan2013}, while the linear pool is not. In essence, the weighting transforms the predictive distributions, and therefore simultaneously re-calibrates the pooled distributions when estimating the weights. The shape of the weight function allows for a better understanding of how the predictions are transformed, and in which regions of the outcome space each component prediction is valuable. An analogous result holds when $F_{0}, \dots, F_{J}$ all have finite support.

\begin{propo}\label{prop:universal}
    Let $\Mm_{\Xx}$ be a class of absolutely continuous probability distributions on $\Xx$, and consider any $F_{0}, F_{1}, \dots, F_{J} \in \Mm_{\Xx}$, with corresponding densities $f_{0}, f_{1}, \dots, f_{J}$, such that
    \[
    \mathrm{supp}(F_{0}) = \mathrm{supp}(F_{1}) \cup \dots \cup \mathrm{supp}(F_{J}).
    \]
    Then, there exist non-negative weight functions $w_{1}, \dots, w_{J}$ such that
    \[
    \sum_{j=1}^{J} w_{j}(x) f_{j}(x) = f_{0}(x) \quad \text{for all $x \in \Xx$}.
    \]
\end{propo}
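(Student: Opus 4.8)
The plan is to build the weight functions pointwise, reducing the claim to a single scalar linear equation at each $x \in \Xx$: fix $x$ and exhibit $w_1(x), \dots, w_J(x) \geq 0$ solving $\sum_{j=1}^{J} w_j(x) f_j(x) = f_0(x)$. The entire role of the support hypothesis is to make this nonnegative equation solvable everywhere. Reading $\mathrm{supp}(F_j)$ as the positivity set $\{f_j > 0\}$, the assumption $\mathrm{supp}(F_0) = \bigcup_{j=1}^{J} \mathrm{supp}(F_j)$ says precisely that $f_0(x) > 0$ if and only if $f_j(x) > 0$ for at least one $j$. Thus wherever the right-hand side is positive at least one coefficient $f_j(x)$ is available to absorb it, and wherever $f_0(x) = 0$ every $f_j(x)$ vanishes too, so the equation degenerates harmlessly to $0 = 0$.

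Rather than invoke an abstract selection, I would write down an explicit solution. Set $s(x) = \sum_{l=1}^{J} f_l(x)$ and define
\[
w_j(x) = \frac{f_0(x)}{s(x)} \text{ on } \{s > 0\}, \qquad w_j(x) = 0 \text{ on } \{s = 0\},
\]
taking the same formula for every $j$. On $\{s > 0\}$ one computes $\sum_{j=1}^{J} w_j(x) f_j(x) = (f_0(x)/s(x)) \sum_{j=1}^{J} f_j(x) = f_0(x)$; on $\{s = 0\}$ all $f_j(x) = 0$ and, by the support identity, $f_0(x) = 0$, so both sides are zero. Nonnegativity of each $w_j$ is immediate since $f_0, s \geq 0$, which establishes the stated identity for all $x$. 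As a by-product, integrating the identity gives $\sum_{j=1}^{J} \int_{\Xx} w_j \dd F_j = \int_{\Xx} f_0 \dd x = 1$, so the associated $F_{OS}$ in \eqref{eq:combi3} is a genuine probability distribution equal to $F_0$; the symmetric choice above could of course be replaced by any pointwise nonnegative solution, for instance loading all mass on a single index $j$ with $f_j(x) > 0$ if weight functions with disjoint active regions are preferred.

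The only real subtlety — and the point I would be most careful about — is the reading of \emph{support}. The pointwise identity can hold at every $x$ only if support is interpreted as the positivity set $\{f_j > 0\}$ and not its closure: under the closure reading a point $x$ may have $f_0(x) > 0$ while lying merely on the boundary of each $\{f_j > 0\}$, so that every $f_j(x) = 0$, and there the left-hand side is forced to be zero while $f_0(x)$ is positive, which no weights can repair. With support read as the positivity set, the hypothesis $\{f_0 > 0\} = \bigcup_{j=1}^{J} \{f_j > 0\}$ is exactly what the construction requires and the argument is complete. The one remaining formality is measurability of the $w_j$, needed for \eqref{eq:combi3} to be well defined, which is clear since each is the ratio of the measurable function $f_0$ to the measurable function $s$, extended by zero on $\{s = 0\}$.
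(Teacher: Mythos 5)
Your construction is exactly the paper's own proof: the paper sets $f_{+} = \sum_{j=1}^{J} f_{j}$ and takes $w_{j}(x) = f_{0}(x)/f_{+}(x)$ for $x \in \mathrm{supp}(F_{0})$ (which coincides with your set $\{s > 0\}$ under the positivity-set reading of support) and $0$ otherwise, identical to your choice of the same weight $f_{0}/s$ for every $j$. Your additional remarks --- that the identity forces support to be read as $\{f_{j} > 0\}$ rather than its closure, and that the $w_{j}$ are measurable and integrate to give a genuine probability distribution --- are correct observations about subtleties the paper leaves implicit, but they do not alter the argument.
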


\begin{proof}
    Let $f_{+} = \sum_{j=1}^{J} f_{j}$, and let $w_{j}(x) = f_{0}(x)/f_{+}(x)$ for any $x \in \mathrm{supp}(F_{0})$ and 0 otherwise, for all $j = 1, \dots, J$. Then, for any $x \in \Xx$,
    \[
    \sum_{j=1}^{J} w_{j}(x) f_{j}(x) = w_{1}(x) f_{+}(x) = f_{0}(x).
    \]
\end{proof}

\begin{remark}
    The proof of Proposition \ref{prop:universal} does not require that the weight functions $w_{1}, \dots, w_{J}$ are different. Hence, even if we only have one (absolutely continuous) predictive distribution $F_{1}$ that has the same support as $F_{0}$, then there exists a weight function $w_{1}$ such that $w_{1}(x)f_{1}(x) = f_{0}(x)$ for all $x \in \Xx$. This weighting therefore additionally provides a flexible method to re-calibrate or post-process predictive densities. 
    
    When $F_{0}$ and $F_{1}$ both have (the same) finite support, this re-calibration can be performed efficiently using the framework of Proposition \ref{prop:prop}, similarly to as in Example \ref{ex:wos}. On the other hand, when $F_{0}$ and $F_{1}$ do not have finite support, estimating the weight functions generally cannot be solved using Proposition \ref{prop:prop}. To see this, consider the more general case where, given a positive definite kernel $k$, we wish to find the weight functions $w_{1}, \dots, w_{J}$ in \eqref{eq:combi5} that optimise the corresponding empirical kernel score. As in the proof of Proposition \ref{prop:prop}, by embedding the predictions into an RKHS, the weight functions can be estimated by minimising
    \[
    \sum_{i=1}^{n} \alpha_{i} \left[ \sum_{j=1}^{J} \sum_{l=1}^{J} \E \left[ k(X_{j,i}, X_{l,i}) w_{j}(X_{j,i}) w_{l}(X_{l,i}) \right] - 2 \sum_{j=1}^{J} \E \left[k(X_{j,i}, y_{i}) w_{j}(X_{j,i}) \right] \right].
    \]
    This generally cannot be solved using the framework of Proposition \ref{prop:prop}, unless the weight functions are non-zero at only a finite number of points, or if the predictive distributions are discrete.
\end{remark}

\begin{remark}
    When $\Xx \subseteq \R$ and the component distributions $F_{j}$ have corresponding quantile function $F_{j}^{-1}$, \cite{BerrischZiel2023} propose an adaptive quantile averaging-based combination scheme:
    \begin{equation}\label{eq:combi5}
        F_{QA}^{-1}(\alpha) = \sum_{j=1}^{J} w_{j}(\alpha) F_{j}^{-1}(\alpha), \quad \alpha \in (0, 1).
    \end{equation}
    Just as \eqref{eq:combi4} employs a weight function that adapts depending on the region of outcome space at which the density function is evaluated, this approach uses a weight function that adapts depending on the quantile level $\alpha$. While we thus far treat the $x_{j,i}^{m}$ as samples from an underlying distribution, they could also be interpreted as predictive quantiles: if $M_{1} = \dots = M_{J}$, then the order statistics of the discrete predictive distribution $F_{j}$ would correspond to the quantiles $F_{j}^{-1}(\alpha)$ at fixed quantile levels $\alpha$. In this case, estimating the weights $w_{j,1}, \dots, w_{j,M_{j}}$ as in Example \ref{ex:wos} would return the weight functions $w_{j}$ in \eqref{eq:combi5}. Hence, when working with discrete predictive distributions, Proposition \ref{prop:prop} can be leveraged to efficiently estimate the weights in \eqref{eq:combi5} that are optimal with respect to the CRPS (or any other kernel score).
\end{remark}

\section{Application to weather forecasts}\label{sec:app}

In this section, we apply the combination schemes introduced in the previous sections in the context of weather forecasting. Weather forecasts typically take the form of discrete predictive distributions (more commonly referred to as ensemble forecasts), generated from various numerical or AI-based weather prediction models. We consider the predictions issued by three operational forecast models at the Swiss Federal Office for Meteorology and Climatology (MeteoSwiss). We pool the forecasts from the three models into one predictive distribution, and use this to estimate the relative importance of each model. For each forecast combination strategy, we study how the weights assigned to the different models vary in time, space, and for different forecast horizons. The forecast models all issue predictions in the form of discrete predictive distributions, and they can therefore be pooled efficiently by leveraging the results in the previous sections.

\subsection{Data}\label{sec:data}

We consider wind speed forecasts issued at 00 UTC for hourly intervals from 1 to 33 hours in advance. For this time range, MeteoSwiss have 3 forecast models in operation: 
\begin{itemize}
    \item \textbf{COSMO-1E}: 11 sample members run at a finer spatial resolution;
    \item \textbf{COSMO-2E}: 21 sample members run at medium spatial resolution;
    \item \textbf{ECMWF IFS}: 51 sample members run at a coarser spatial resolution.
\end{itemize}
The sample members of all models are generated from numerical weather prediction models that involve integrating an initial state of the atmosphere through time according to physical laws. All discrete predictive distributions include a \emph{control member}, which corresponds to the weather model run from a best guess of the initial state of the atmosphere, with the remaining sample members generated by randomly perturbing this best guess, and running the weather model from these perturbed initial conditions. Further details about the models can be found in \cite{KellerEtAl2021} and references therein, while \cite{Buizza2018} provides a more comprehensive overview of numerical weather prediction.

The weather model output has been interpolated to 82 weather stations across Switzerland, shown in Figure \ref{fig:alt_map} as a function of altitude. Switzerland has a markedly varied topography, and the altitudes of the stations range from 203m to 3130m above sea level. Forecasts are available daily for three years between the 2nd June 2020 and 31st May 2023, during which none of the three models undergo major changes. Some dates are missing for all models and stations, resulting in a total of $1030 \times 82 = 84460$ forecast-observation pairs for each of the three models. 

Rather than issuing a wind speed forecast obtained from only one of the models, operational weather centres wish to combine the information within their various models to yield a more informative prediction. We illustrate how this can be achieved using the framework discussed in the previous sections. We compare the forecasts for several competing approaches: 1) the discrete predictive distributions issued by each of the three individual weather models (\emph{COSMO-1E}, \emph{COSMO-2E}, and \emph{ECMWF IFS}); 2) a linear pool prediction that assigns equal weight to the three forecast models, often referred to as a multi-model forecast in the weather forecasting literature (\emph{LP Equal}); 3) a linear pool of the three discrete predictive distributions with weights estimated by minimising a proper scoring rule, as in Example \ref{ex:lpool} (\emph{LP Discrete}); 4) a linear pool of the point predictions obtained from the sample members of the three forecast models, as in Example \ref{ex:wem} (\emph{LP Point}); and 5) a linear pool of the order statistics of the three forecast models, as in Example \ref{ex:wos} (\emph{LP Ordered}). 

For the latter three methods, the weights are estimated by minimising the CRPS over a training data set, performed using the energy kernel in the convex quadratic optimisation problem outlined in Proposition \ref{prop:prop}. Analogous conclusions are drawn when the energy kernel is replaced with the Gaussian kernel (not shown). The first two years of data (2nd June 2020 to 1st June 2022) are used as training data on which to estimate the weights, and the resulting forecasts are then assessed out-of-sample using the remaining year of data.

The weights of the combination strategies are first estimated at each station separately. Estimating weights separately at each station allows spatial patterns in the weights to be studied, but ignores the multivariate structure of the forecasts. For example, one forecast could be accurate at the individual stations, but could completely misrepresent the dependence between different stations. Hence, this forecast would receive a lower weight if the multivariate forecasts were combined, where each station corresponds to a different dimension. This is discussed further in Section \ref{sec:disc}.

\begin{figure}[t!]
    \centering
    \includegraphics[width=0.7\linewidth]{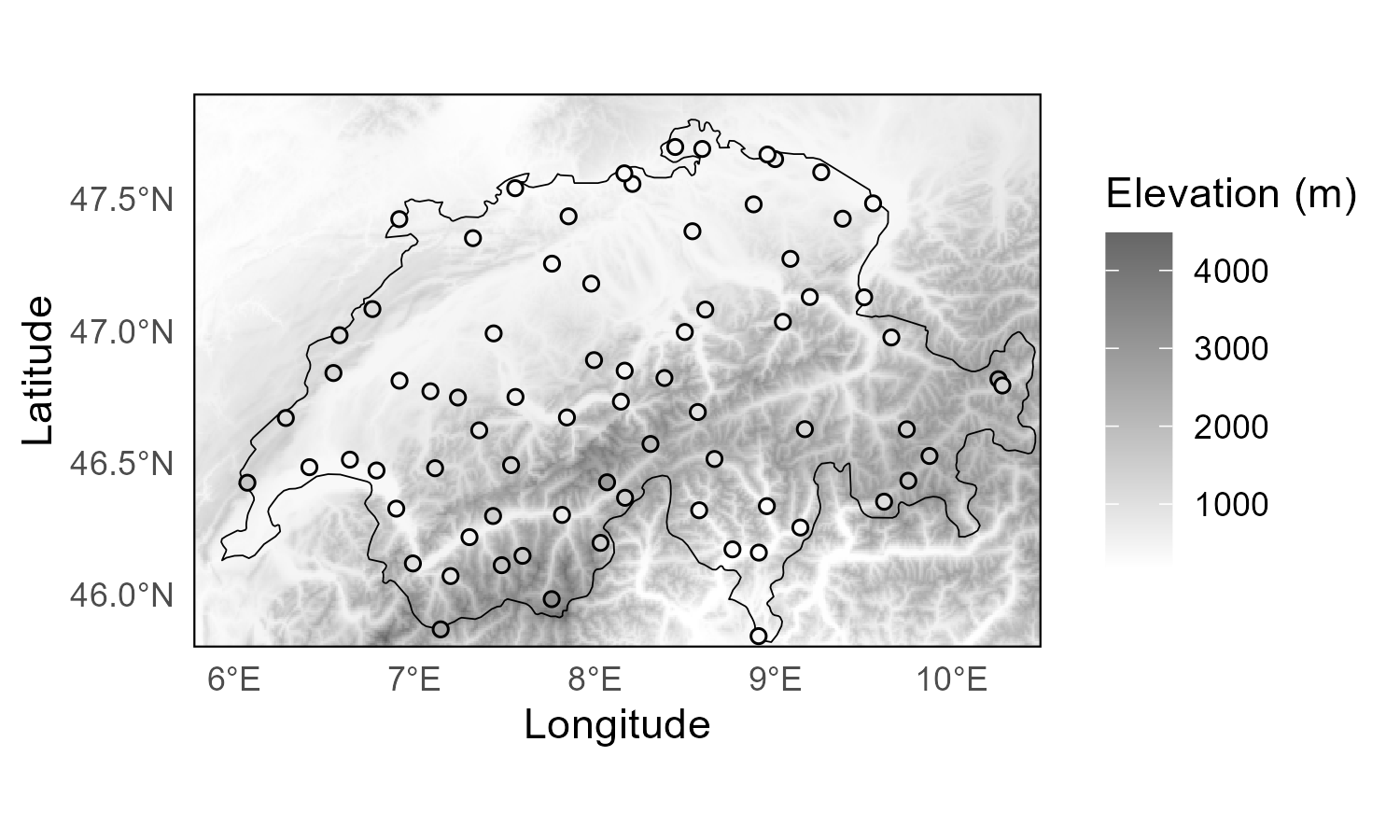}
    \vspace{-1cm}
    \caption{The 82 weather stations in Switzerland.}
    \label{fig:alt_map}
\end{figure}

\subsection{Patterns in sample member importance}\label{sec:method}

Firstly, we wish to investigate how the relative importance of the different sample members and discrete predictive distributions evolves over time and space. This can be achieved as in Example \ref{ex:wem}. Figure \ref{fig:weight_vs_mse} shows the average weights assigned to each sample member as a function of the mean squared error (MSE) of the corresponding point prediction. Sample members obtained from the same weather models receive a similar weight, with the higher resolution COSMO-1E model typically receiving a higher weight than the lower resolution models, which aligns with these predictions receiving a lower MSE. Interestingly, the control member of the discrete predictive distributions typically results in the best point-valued prediction of the observation, but receives the least weight when combining the sample members into a probabilistic prediction. This could be because the uncertainty in the sample members is more useful when generating a predictive distribution, rendering the control member less informative.

\begin{figure}[b!]
    \centering
        \begin{subfigure}{0.49\textwidth}
            \includegraphics[width=\linewidth]{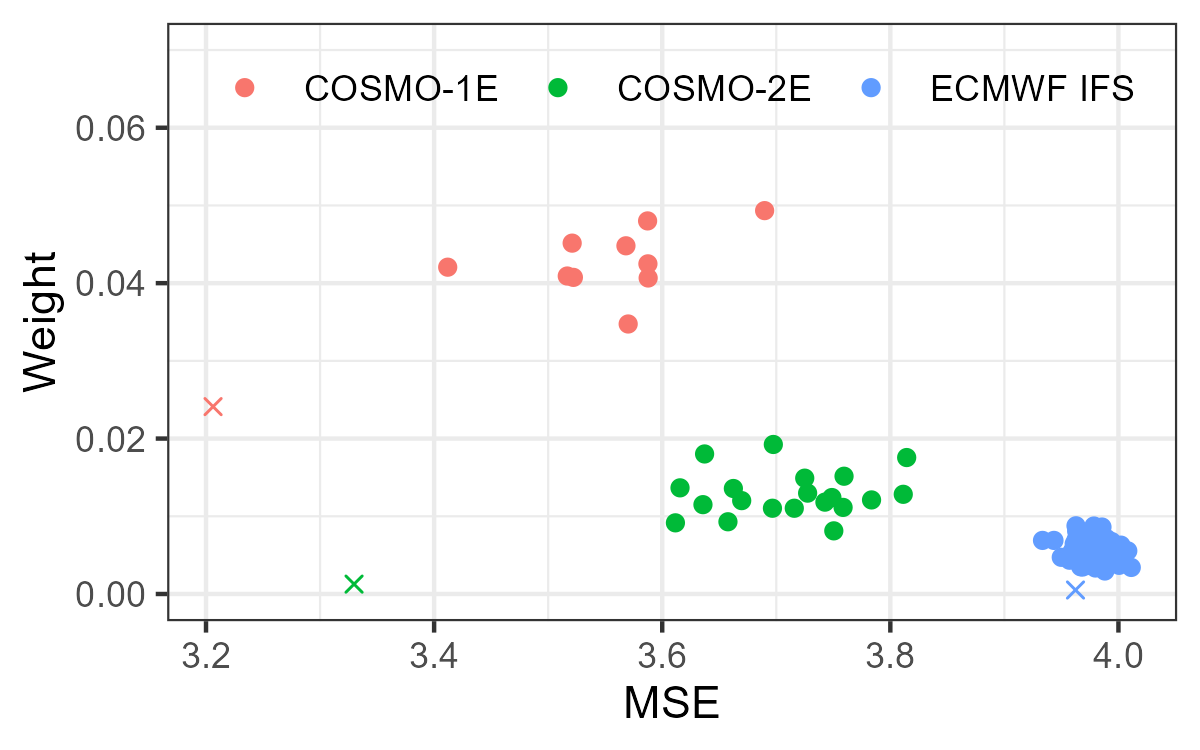}
        \caption{Univariate}
        \end{subfigure}
        \begin{subfigure}{0.49\textwidth}
            \includegraphics[width=\linewidth]{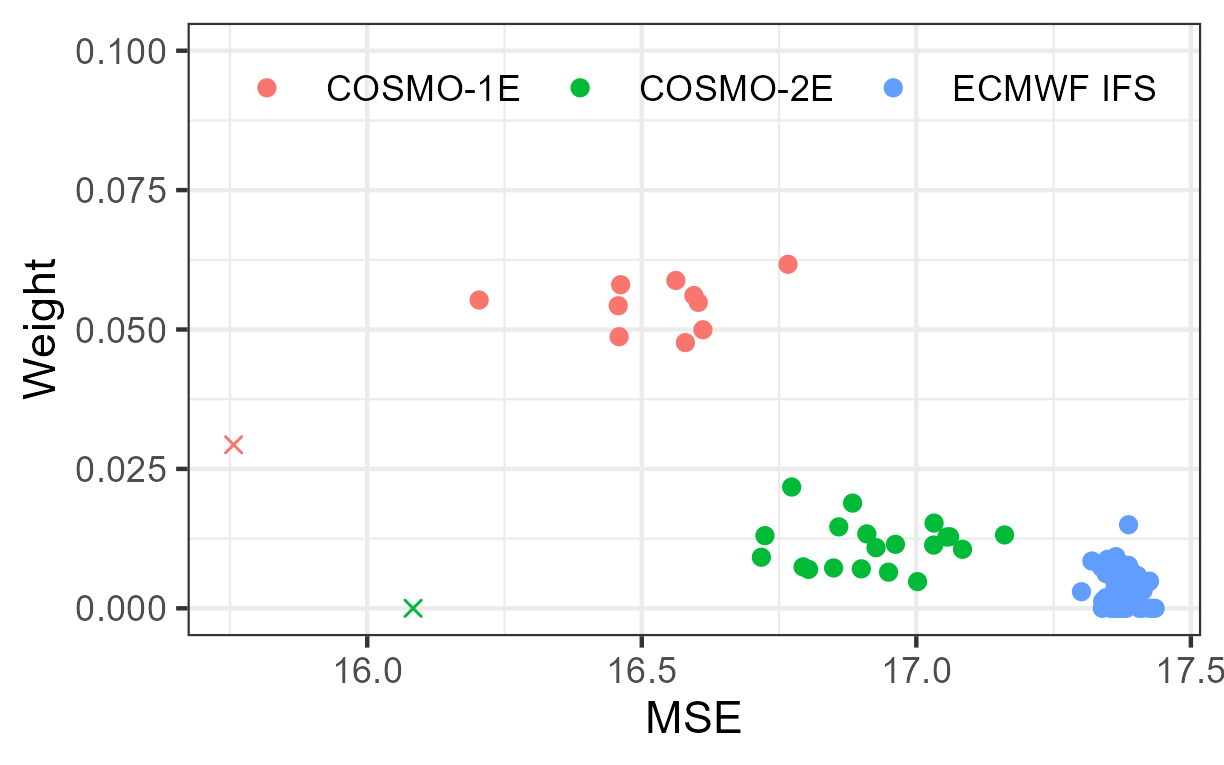}
        \caption{Multivariate}
        \end{subfigure}
    \caption{The average weight assigned to each sample member against the member's mean squared error. The $\times$ symbol corresponds to the control member of each discrete predictive distribution. Results are shown at a lead time of 18 hours. In the univariate case, the weights are averaged across all stations.}
    \label{fig:weight_vs_mse}
\end{figure}

The weights assigned to the different sample members can be summed to get the overall contribution from each discrete predictive distribution. These weights are shown as a function of the forecast horizon, or forecast lead time, in Figure \ref{fig:weight_vs_lead}. Since the forecasts are initialised at 00 UTC, a lead time of 1 hour corresponds to 01 UTC, and so on. COSMO-1E receives the highest weight at all lead times, followed by COSMO-2E and then ECMWF IFS. The relative importance of the forecast models changes slightly throughout the day, with COSMO-1E particularly important during the early afternoon, when wind speeds are generally at their peak. The results are analogous if the weight of each model is obtained using the traditional linear pool.

\begin{figure}[t!]
    \centering
        \begin{subfigure}{0.49\textwidth}
            \includegraphics[width=\linewidth]{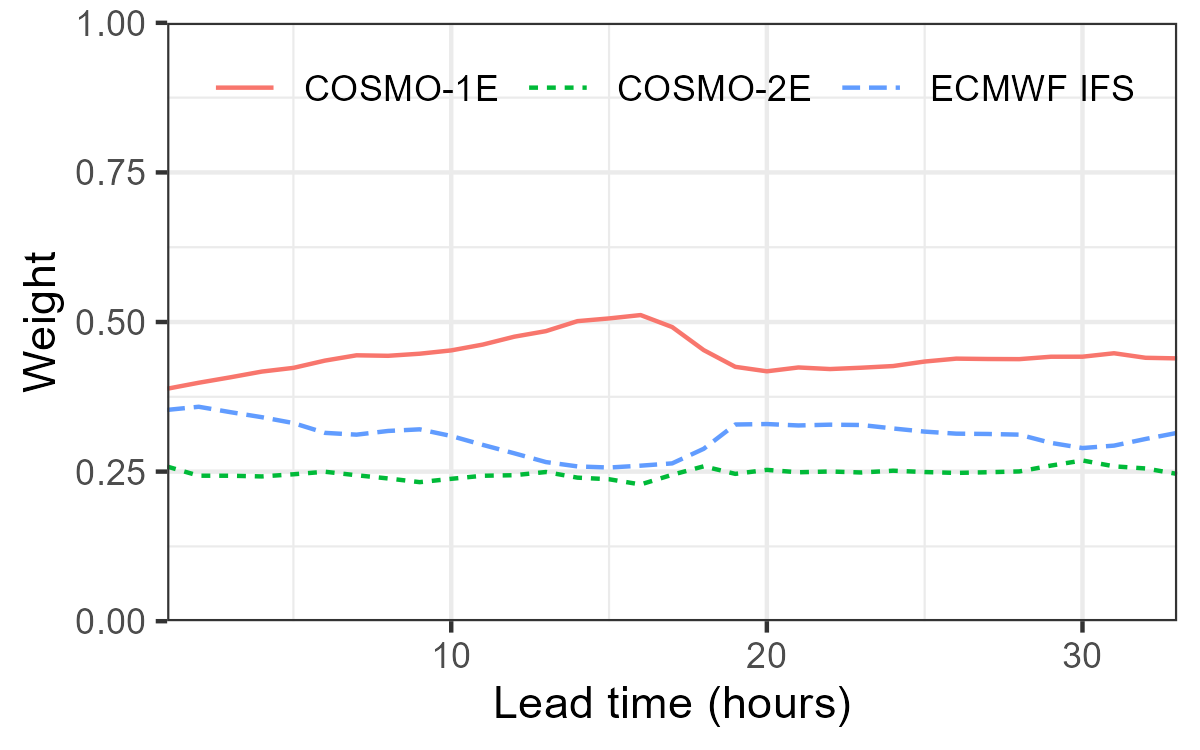}
        \caption{Univariate}
        \end{subfigure}
        \begin{subfigure}{0.49\textwidth}
            \includegraphics[width=\linewidth]{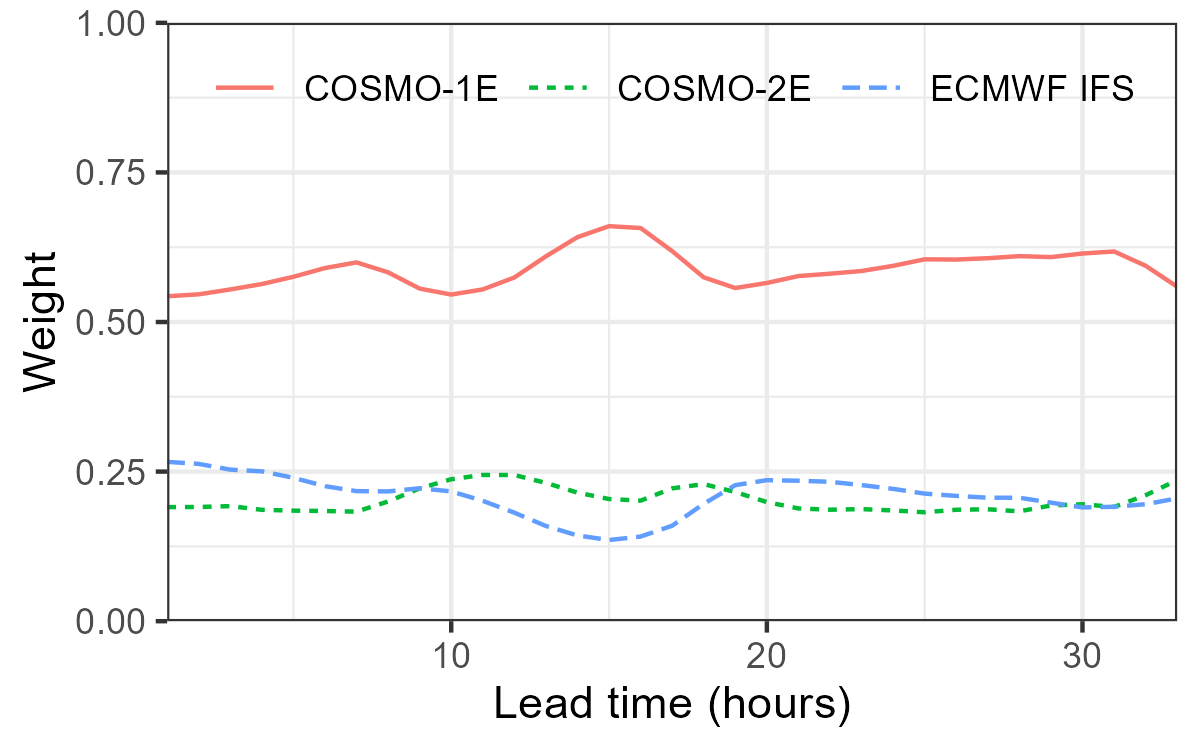}
        \caption{Multivariate}
        \end{subfigure}
    \caption{The average weight assigned to each forecast model as a function of lead time. In the univariate case, the weights are averaged across all stations.}
    \label{fig:weight_vs_lead}
\end{figure}

Figure \ref{fig:weight_map} displays the forecast model that receives the highest weight (on average) at each spatial location. The ECMWF IFS model tends to receive a lower weight at weather stations in valleys and on mountain tops, whereas the COSMO-1E typically performs best at these locations. This is likely because the low resolution model does not accurately capture changes in the local topography and therefore exhibits a larger bias when compared to the observations at the mountainous stations. 

\begin{figure}[b!]
    \centering
    \includegraphics[width=0.7\linewidth]{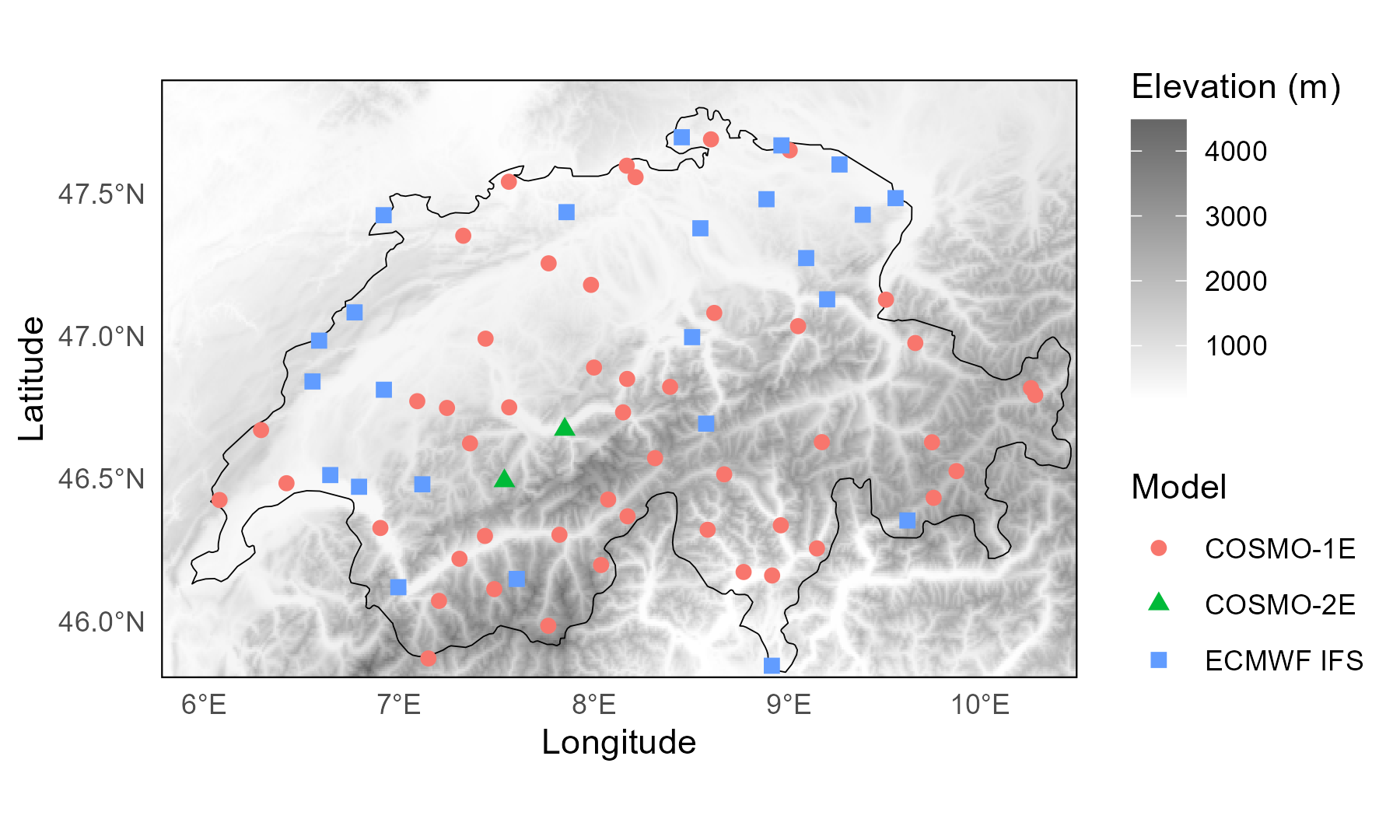}
    \vspace{-0.5cm}
    \caption{Forecast model at each of the 82 weather stations that receives the highest weight on average. Results are shown at a lead time of 18 hours.}
    \label{fig:weight_map}
\end{figure}

Figure \ref{fig:crps_vs_lead} shows the CRPS for the various models as a function of the forecast lead time, aggregated across time and station. The ordering of the forecast models is generally insensitive to the lead time: the ECMWF IFS model results in the least accurate forecasts, followed by the COSMO-2E and then the COSMO-1E models. An equal-weighted linear pool outperforms the three individual models at some lead times, but also generates less accurate forecasts than the COSMO-1E at other lead times. Linear pooling the three forecast models, with weights estimated by minimising the CRPS using the kernel framework, systematically outperforms all of the individual models, resulting in CRPS values that are 10-16\% lower than those of the COSMO-1E forecasts. By estimating separate weights for each sample member of the forecast models, predictions can be obtained that are 3-6\% more accurate than those generated using the traditional linear pool.

\begin{figure}
    \centering
    \begin{subfigure}{0.49\textwidth}
        \includegraphics[width=\linewidth]{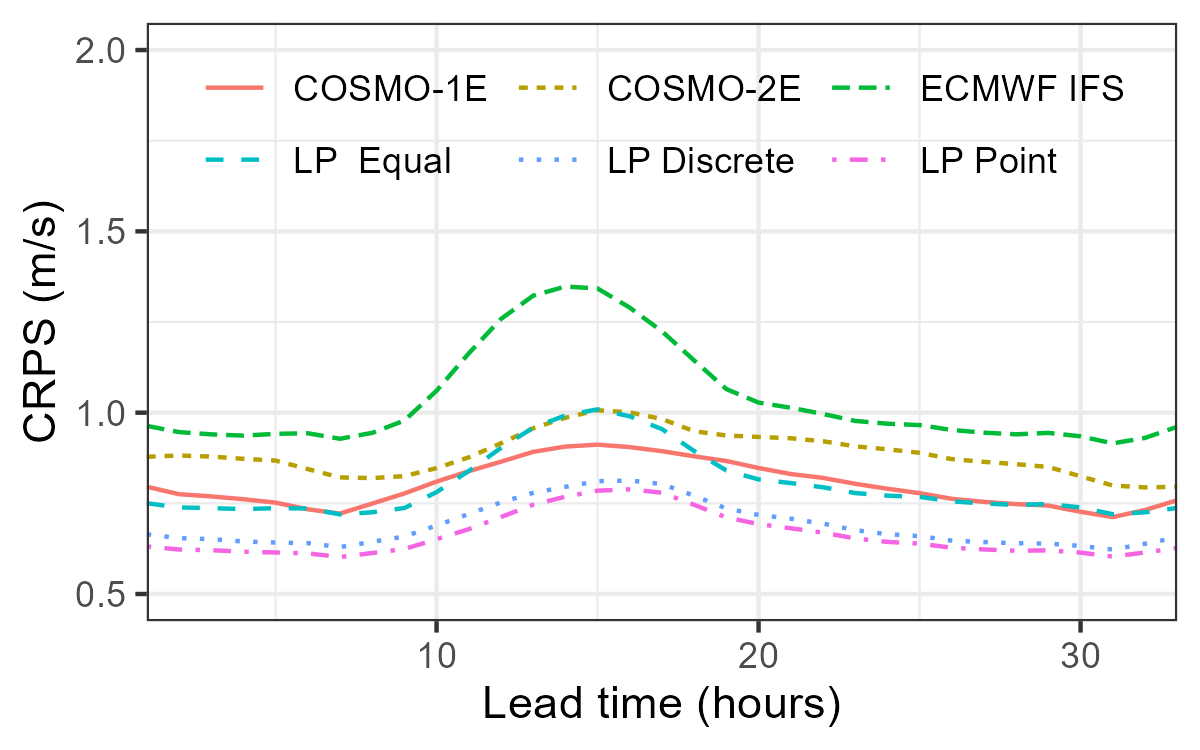}
        \caption{Univariate}
    \end{subfigure}
    \begin{subfigure}{0.49\textwidth}
        \includegraphics[width=\linewidth]{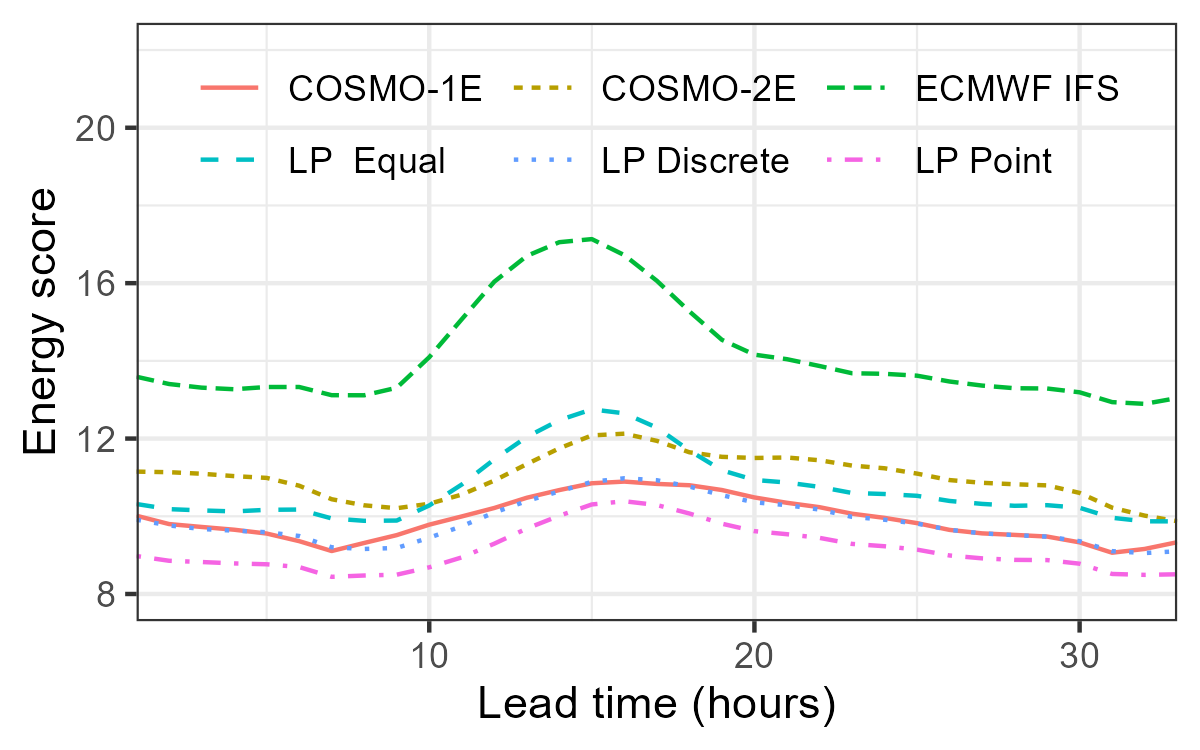}
        \caption{Multivariate}
    \end{subfigure}
    \caption{Accuracy of the forecasting models and combination methods as a function of lead time. Accuracy is measured using the average CRPS in the univariate case, and the average energy score in the multivariate case. In the univariate case, the scores are averaged across all stations.}
    \label{fig:crps_vs_lead}
\end{figure}

\subsection{Relative order statistic importance}\label{sec:order}

The results shown thus far demonstrate how the relative importance of the different weather models, and each of their sample members, can be determined using kernel mean embeddings, and that we can then investigate how these weights change over time, space, and forecast horizon. However, as discussed in Example \ref{ex:wos}, weights could alternatively be estimated for order statistics of the discrete predictive distributions, rather than the individual sample members. This corresponds to a flexible generalisation of the linear pool.

The raw wind speed forecasts generated from each of the three weather models are systematically miscalibrated. In particular, they exhibit a severe positive bias, with predicted wind speeds typically larger than those observed, and are also under-dispersive (see probability integral transform (PIT) histograms in Figure \ref{fig:pit_hists} in the appendix). It is well-established that numerical weather models generally exhibit large biases when predicting surface weather variables, and fail to accurately quantify the uncertainty in the outcome. By assigning and estimating weights corresponding to order statistics of the discrete predictive distributions, we essentially transform the predictive distributions, thereby re-calibrating the predictions whilst simultaneously quantifying their relative importance.

Figure \ref{fig:order_weights} shows the average weight assigned to each order statistic of the three discrete predictive distributions. The weights are again obtained by using the energy kernel within the quadratic optimisation problem of Proposition \ref{prop:prop}. In light of the biases in the weather models, the weight profile in Figure \ref{fig:order_weights} demonstrates that a large weight is assigned to the smallest and largest members of the discrete predictive distributions, while all other sample members receive a comparatively low weight. By assigning a large weight to the first order statistic, the distribution function is shifted towards lower values, helping to account for the positive bias in the predictions. A large weight for the last order statistic then ensures that the spread of the distribution is large enough to quantify the uncertainty in the predictions, helping to account for the under-dispersion of the predictive distributions.

\begin{figure}[t!]
    \centering
    \includegraphics[width=0.32\linewidth]{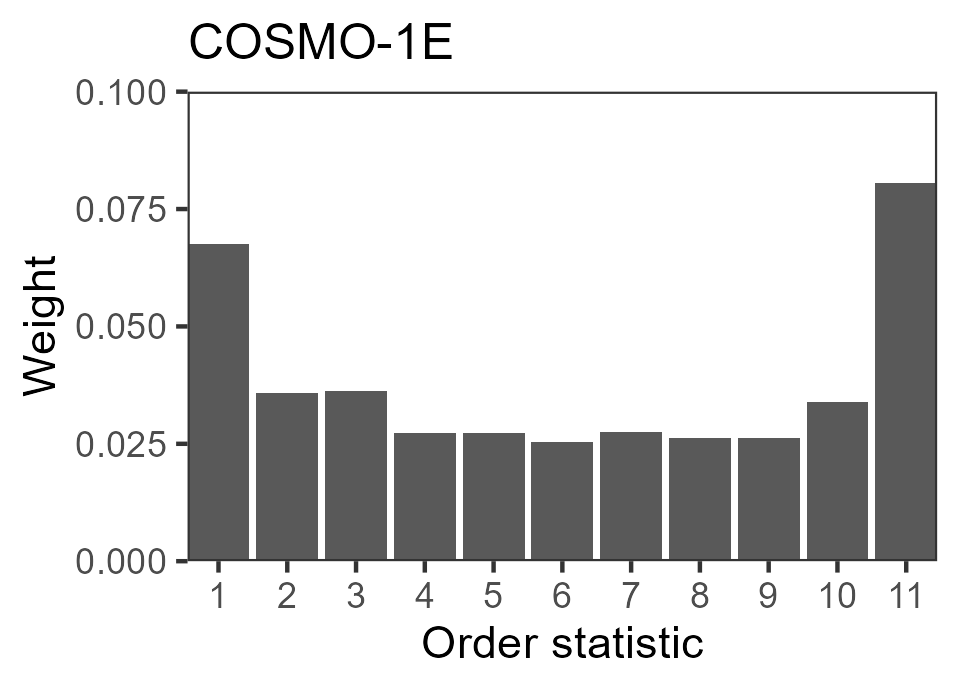}
    \includegraphics[width=0.32\linewidth]{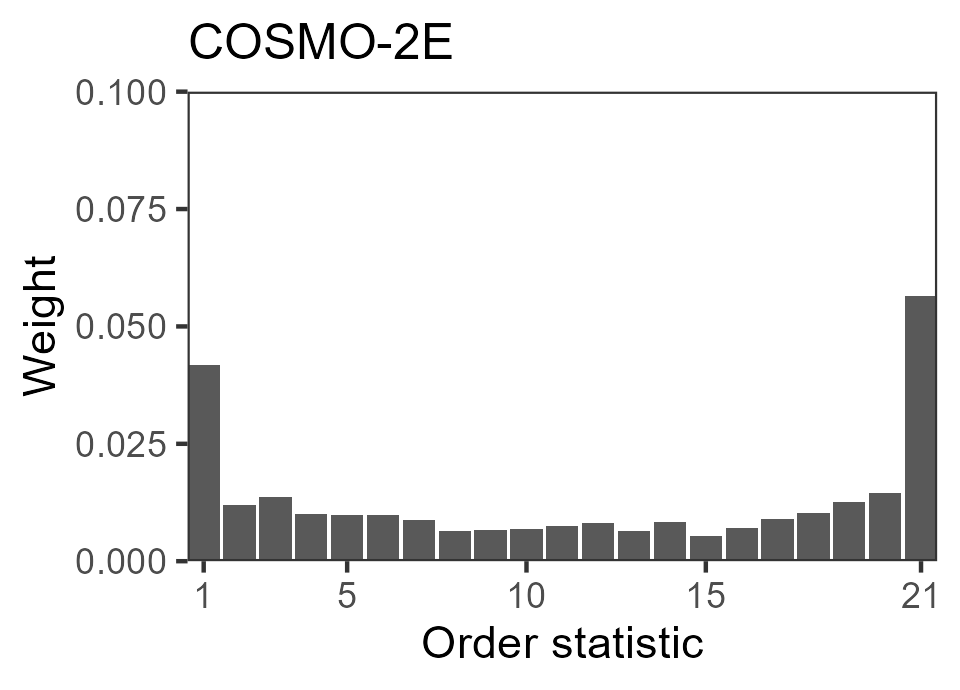}
    \includegraphics[width=0.32\linewidth]{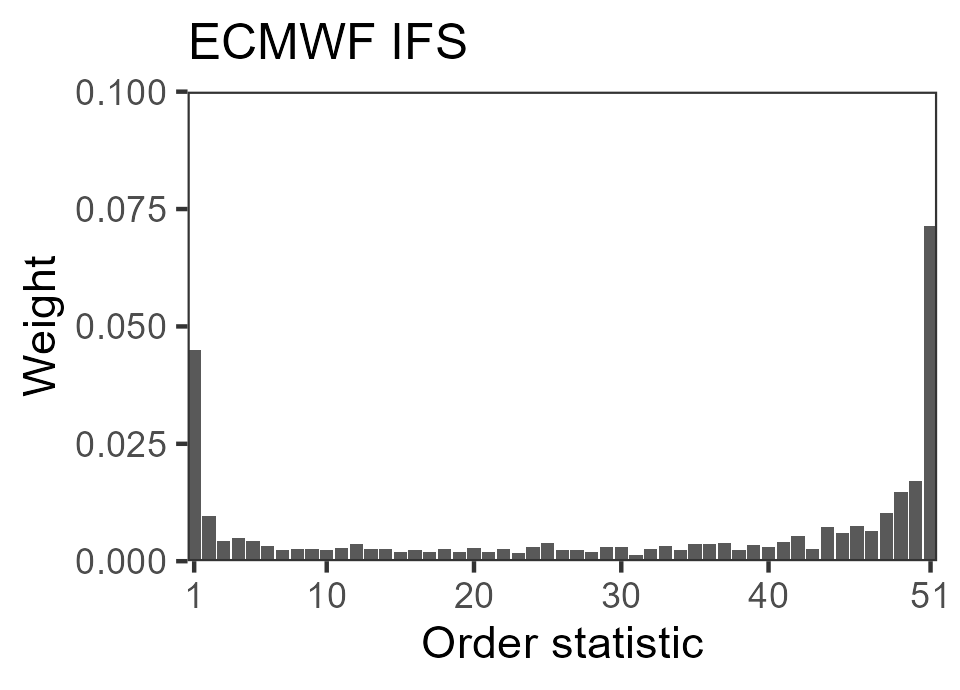}
    \caption{The weight assigned to each order statistic of the samples obtained by the different forecast models.}
    \label{fig:order_weights}
\end{figure}

Weighting the order statistics of the discrete predictive distributions yields large improvements upon all other methods: the resulting predictions are consistently 20-30\% more accurate than the multi-model forecasts, when assessed using the CRPS, and are between 10\% and 18\% more accurate than those generated using the traditional linear pool. The average accuracy of the four linear pool-based combination methods, as assessed using the CRPS, is displayed in Figure \ref{fig:crpss_ordered}.

\begin{figure}[b!]
    \centering
    \includegraphics[width=0.5\linewidth]{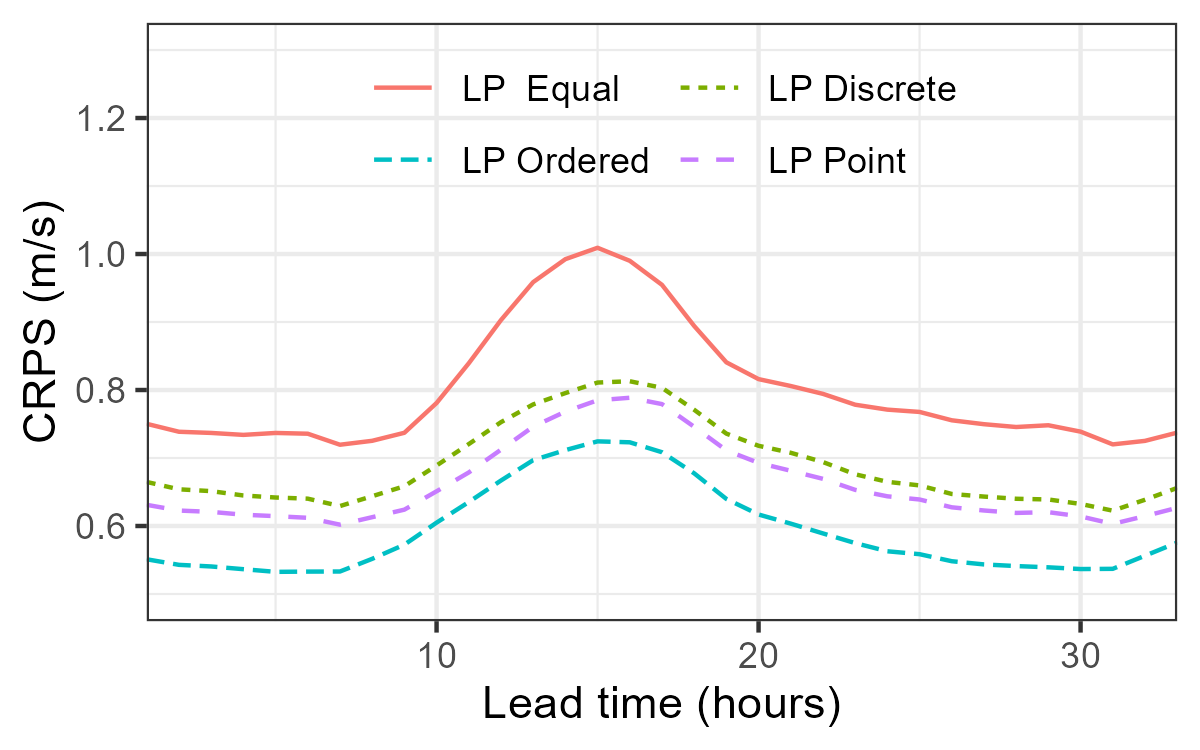}
    \caption{CRPS of the linear pool-based combination methods as a function of lead time. The scores are averaged across all stations.}
    \label{fig:crpss_ordered}
\end{figure}

This order statistics-based approach performs particularly well in this application because the original component predictions are strongly biased, and, in contrast to the traditional linear pool, it allows the predictive distributions to be transformed using a weight function, helping to alleviate these biases. However, the approach also performs well when the predictions are not biased. In practice, numerical weather forecasts typically undergo some form of statistical post-processing to re-calibrate the weather model output. In the appendix, we present the same results for the different combination schemes proposed herein applied to post-processed wind speed forecasts; linearly pooling order statistics still results in the most accurate predictions.

\section{Efficient pooling on arbitrary outcome domains}\label{sec:disc}

In the previous section, the combination strategies are applied univariately, with weights estimated by optimising the CRPS. This is achieved in the RKHS framework of Proposition \ref{prop:prop} using the energy kernel with $d = 1$. However, the results of Proposition \ref{prop:prop} hold more generally when combining probabilistic predictions made on arbitrary domains. For example, while multivariate combination strategies have received considerably less attention in the literature than univariate approaches, weights corresponding to multivariate predictions could also be estimated efficiently by selecting an appropriate kernel and leveraging the results in Proposition \ref{prop:prop}. 

Consider the weather forecasting application in Section \ref{sec:app}. We now show results when the three forecast models are combined with weights estimated by minimising the energy score over the training data. This again corresponds to performing the optimisation within the RKHS of the energy kernel, now with $d = 82$ (the number of stations). In this case, a multivariate forecast can be interpreted as a prediction of the wind speed at all 82 weather stations. The results are qualitatively similar to those when weights are estimated locally, exhibiting similar patterns over lead time and time of year. Figure \ref{fig:weight_vs_lead} shows the multivariate weights as a function of lead time. The COSMO-1E forecasts receive yet more weight than in the univariate case, suggesting they also better represent the dependencies in the wind speeds at the different locations. The contributions of COSMO-2E and ECMWF IFS to the linear pool prediction are similar at all lead times.

Figure \ref{fig:weight_vs_mse} also shows the weights assigned to each sample member as a function of the multivariate MSE, $ \sum_{i=1}^{n} \| x_{j, i}^{m} - y_{i} \|^{2} / n$ for $j = 1, \dots, J$, and $m = 1, \dots, M_{j}$. As in the univariate case, a smaller MSE generally corresponds to a higher weight, with the exception of the control members, which again receive the lowest weight of all sample members.

Unsurprisingly, estimating the weights multivariately results in forecasts that perform worse with respect to the CRPS than when weights are estimated univariately. However, the approach results in valid multivariate predictive distributions, whereas estimating weights separately along each dimension does not. Figure \ref{fig:crps_vs_lead} displays the energy score for the different forecasting methods as a function of lead time. The patterns are similar to those observed for the CRPS: COSMO-1E outperforms COSMO-2E and ECMWF IFS at all lead times. Linearly pooling the predictive distributions does not yield large improvements upon the individual component models, instead performing very similarly to COSMO-1E. However, linearly pooling the individual sample members improves energy scores by 5-10\% when compared with the traditional linear pool.

As mentioned in Example \ref{ex:wos}, weighting the order statistics cannot readily be applied beyond the univariate case. To apply the same framework to the multivariate discrete predictive distributions, we must choose an order on $\R^{d}$ with which to rank the sample members. The choice of this order will result in a different interpretation of the weights, and we therefore leave an analysis of possible orderings for future work.

The results herein could be generalised further to more complex domains. For example, the combination formulas could also be applied to the gridded forecasts prior to interpolating to individual stations, or to the full spatio-temporal trajectories provided by the sample members. This requires the choice of a suitable kernel to employ in the kernel score framework when evaluating these spatio-temporal discrete predictive distributions, the choice of which has not been studied in detail in the forecasting literature.  But having selected such a kernel, the combination schemes can be implemented efficiently using the results of Proposition \ref{prop:prop}.

\section{Conclusion}\label{sec:conclusion}

In this paper, we demonstrate how well-known results related to positive definite kernels and kernel mean embeddings can be leveraged when combining probabilistic predictions. The standard approach to combine probabilistic predictions is to linearly pool their distribution functions. Kernel methods provide a means to efficiently estimate the weights in a linear pool by minimising a proper scoring rule over a training data set. The approach embeds the component distributions into an RKHS, and then finds the weights that minimise the maximum mean discrepancy between the linear pool predictions and the corresponding observations. This results in a convex quadratic optimisation problem, facilitating an efficient implementation. 

For certain kernels and types of predictive distributions, it may not be possible to derive the kernel mean embeddings analytically. However, this becomes trivial when the predictive distributions are discrete, i.e. the predictions are finite samples. In this case, the approach can readily be applied using any kernel, and in any domain of interest. Through the choice of a suitable kernel, this permits optimal probabilistic combination strategies to be applied not only to predictions for univariate real-valued outcomes, but also to predictions for outcomes on arbitrary domains, including multivariate Euclidean space, spatio-temporal domains, function spaces, and graph spaces. 
 
While the linear pool is commonly used to combine probabilistic predictions in practice, the linear pool is somewhat parsimonious, and is known to suffer from some theoretical limitations. Namely, the linear pool does not preserve calibration of the component predictions, instead increasing the dispersion of the predictive distribution \citep{Hora2004,GneitingRanjan2013}. Several non-linear forecast combination strategies have therefore been proposed to circumvent this, generally under the name of the \emph{generalised linear pool} \citep[e.g.][]{DawidEtAl1995}. While these non-linear transformations do not readily align with the RKHS-based estimation framework proposed herein, we study an alternative generalisation of the linear pool that also overcomes some of its theoretical limitations. The approach is capable of quantifying the relative contribution of each component prediction to the combination, whilst simultaneously re-calibrating the combined prediction. In the context of discrete predictive distributions, this corresponds to assigning a separate weight to each order statistic of the samples underlying the predictive distribution, which can be implemented efficiently by embedding the predictions into an RKHS. In an application to operational wind speed forecasts, this approach is found to offer vast improvements upon the traditional linear pool. 

Importantly, the proposed approach maintains a straightforward interpretation. In the application to operational weather forecasts, we verify that sample members generated by running numerical weather models from random perturbations of initial conditions appear exchangeable, and find that the control member of a weather forecast (the model run with unperturbed initial conditions) is typically the least informative sample member when constructing a probabilistic prediction, despite it receiving the best mean squared error when compared to the other sample members. We additionally studied the weights assigned to the different weather models at different forecast lead times, times of the day, and altitudes of the weather station, allowing us to understand the relative performance of the component predictions in different circumstances. 

Combining probabilistic predictions can more generally be interpreted as a distribution-to-distribution regression problem, where the component predictions are distribution-valued covariates, and we are interested in predicting the conditional distribution of the target variable given these covariates; with it being important that the relative contribution of each covariate to the final prediction can be understood. Linear pooling is one approach, but several other distribution-to-distribution regression methods have been proposed. It would be interesting to study these in the context of probabilistic prediction combination, and to identify whether any can additionally be combined with kernel methods to permit an efficient implementation.

Finally, the proposed framework is one example of how kernel methods can be applied within the context of weather and climate forecasting, and one example of how kernel embeddings can be used to efficiently optimise proper scoring rules. We expect that there are several similar applications of these kernel methods, and we hope that this study motivates their implementation in practice.



\subsubsection*{Acknowledgments}
We are very grateful to Jonas Bhend for providing the data used in this study, and to Kartik Waghmare for helpful discussions. 

\bibliography{biblio}
\bibliographystyle{tmlr}

\appendix
\section{Appendix: Extra case study results}

\subsection*{Post-processed forecasts}\label{sec:mbm}

The results in Section \ref{sec:app} correspond to combinations of forecast models that all exhibit large systematic biases. Large improvements can therefore be obtained by implementing an approach that somewhat transforms the predictive distributions to be less biased. In this appendix, we apply the same forecast combination approaches to the forecast models after they have undergone statistical post-processing. Post-processing re-calibrates the output of the numerical weather predictions model, generating predictions that are more reliable. It is therefore well-established within operational weather forecasting suites.

In this case, we implement member-by-member post-processing, which rescales the mean of the discrete predictive distributions, and then shifts the sample members around this mean \citep{VanSchaeybroeckVannitsem2015}. In particular, for each sample $x_{j, i}^{1}, \dots, x_{j, i}^{M_{j}}$, the post-processed discrete predictive distribution contains the transformed members
\[
\check{x}_{j, i}^{m} = (a_{j} + b_{j} \bar{x}_{j,i}) + \sqrt{c_{j} + \frac{d_{j}}{s_{j,i}^{2}}} (x_{j, i}^{m} - \bar{x}_{j,i}),
\]
for $m = 1, \dots, M_{j}$, where $\bar{x}_{j,i}$ and $s_{j,i}^{2}$ are the mean and variance, respectively, of the $j$-th discrete predictive distribution, for $i = 1, \dots, n$, and $a_{j}, b_{j} \in \R$, $c_{j}, d_{j} \geq 0$ are parameters to be estimated; separate parameters are estimated for each forecast model. The parameters can be estimated easily using the method of moments \citep[Chapter 4]{Williams2016}. Since shifting the sample members can result in negative wind speeds, we apply the transformation to square-root transformed wind speeds, before back transforming to get positive sample members. One desirable property of member-by-member post-processing methods is that they do not change the ordering of the sample members, meaning the multivariate structure contained in the sample members is retained. 

The same combination methods can be applied to the post-processed forecasts as were applied in Section \ref{sec:app} to the raw output of the forecast models. Figures \ref{fig:weight_vs_lead_mbm} to \ref{fig:crpss_ordered_mbm} contain the same results as in Section \ref{sec:app} for the post-processed forecasts, while Figures \ref{fig:pit_hists} contains probability integral transform (PIT) histograms that assess the calibration of the various prediction strategies.

Figure \ref{fig:weight_vs_lead_mbm} displays the weight assigned to each of the three post-processed forecasts as a function of lead time. Having removed the biases in the forecasts, the low-resolution ECMWF IFS forecasts are assigned the highest weight. This is reinforced by Figure \ref{fig:weight_map_mbm}, which shows that ECMWF IFS receives the highest weight at the majority of the weather stations.

From Figure \ref{fig:crps_vs_lead_mbm}, we see that the post-processed COSMO-1E forecasts again outperform those obtained using COSMO-2E and ECMWF IFS at all lead times, suggesting the information contained in the forecasts is retained during post-processing. However, since the post-processed forecasts are considerably more reliable than the raw output from the three forecast models, there is less benefit to the combination strategies. Linearly pooling the sample members of the forecast models marginally outperforms COSMO-1E, the equal-weighted linear pool, and the traditional linear pool at most lead times. However, the post-processed COSMO-1E forecasts are clearly most accurate when assessed multivariately using the energy score.

Figure \ref{fig:order_weights_mbm} displays the weights assigned to each order statistic of the discrete predictive distributions when linearly pooling them. In contrast to Figure \ref{fig:order_weights}, higher weight is now assigned to the central order statistics. This is likely because linearly pooling calibrated predictions is guaranteed to yield a predictive distribution that is over-dispersive. By assigning a higher weight to the central order statistics, the approach decreases the spread of the component predictive distributions, allowing the resulting combination to be calibrated, as suggested by Proposition \ref{prop:universal}. This is reinforced by a flatter PIT histogram in Figure \ref{fig:pit_hists}b than the other combination strategies. Figure \ref{fig:crpss_ordered_mbm} demonstrates that this order statistics-based approach yields improvements in accuracy of around 10\% upon all other methods at all lead times.

\begin{figure}
    \centering
    \begin{subfigure}{0.49\textwidth}
        \includegraphics[width=\linewidth]{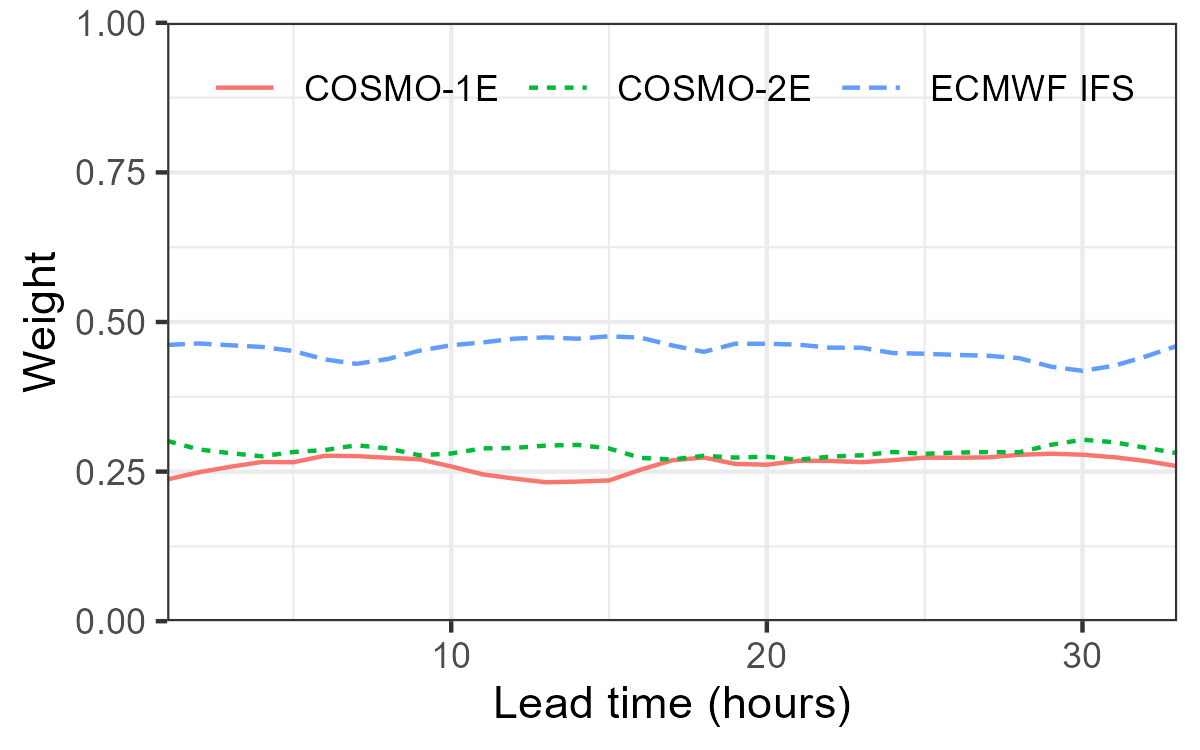}
    \caption{Univariate}
    \end{subfigure}
    \begin{subfigure}{0.49\textwidth}
        \includegraphics[width=\linewidth]{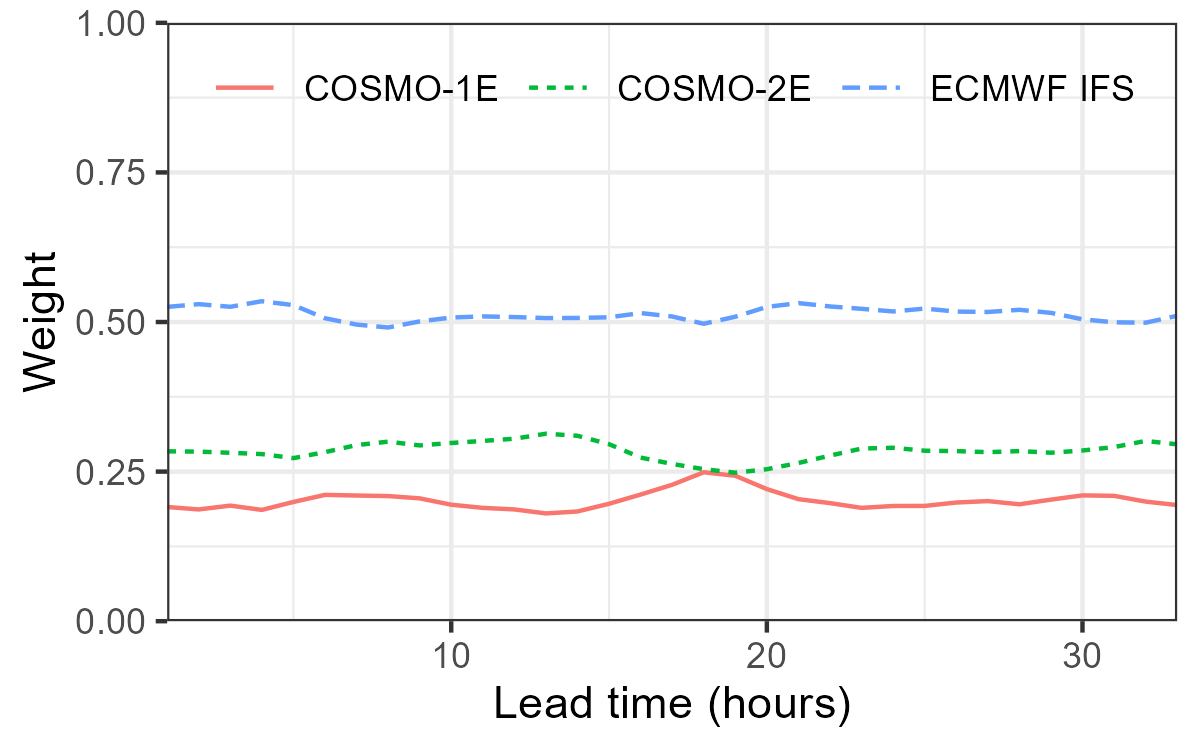}
    \caption{Multivariate}
    \end{subfigure}
    \caption{As in Figure \ref{fig:weight_vs_lead} but for the statistically post-processed predictive distributions.}
    \label{fig:weight_vs_lead_mbm}
\end{figure}

\begin{figure}
    \centering
    \includegraphics[width=0.7\linewidth]{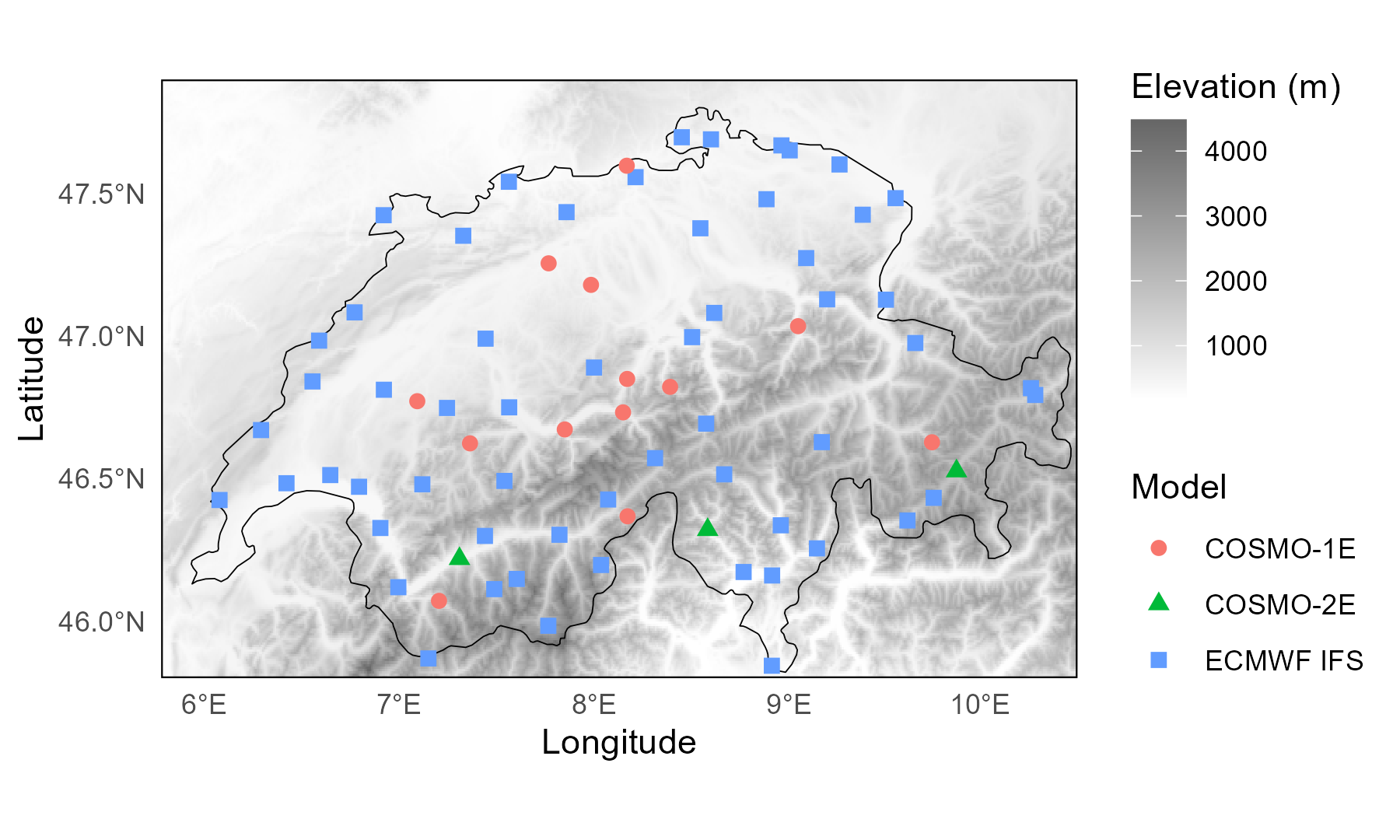}
    \vspace{-0.5cm}
    \caption{As in Figure \ref{fig:weight_map} but for the statistically post-processed predictive distributions.}
    \label{fig:weight_map_mbm}
\end{figure}

\begin{figure}
    \centering
    \begin{subfigure}{0.49\textwidth}
        \includegraphics[width=\linewidth]{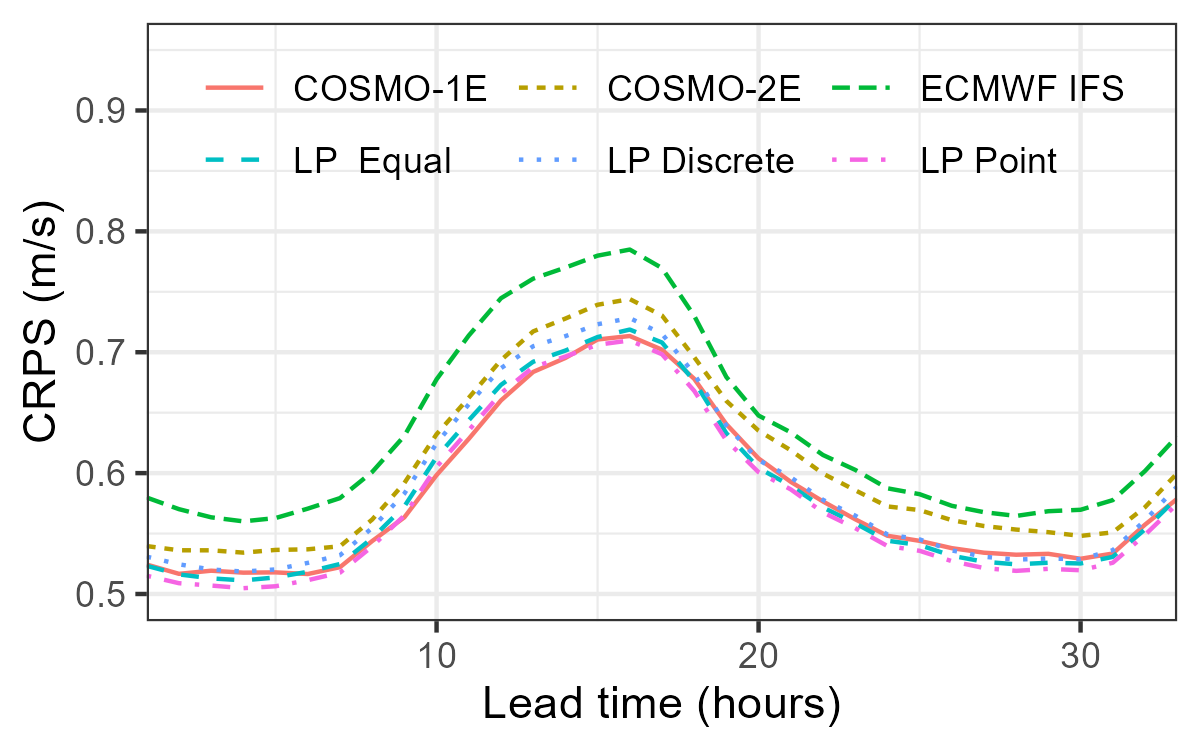}
    \caption{Univariate}
    \end{subfigure}
    \begin{subfigure}{0.49\textwidth}
        \includegraphics[width=\linewidth]{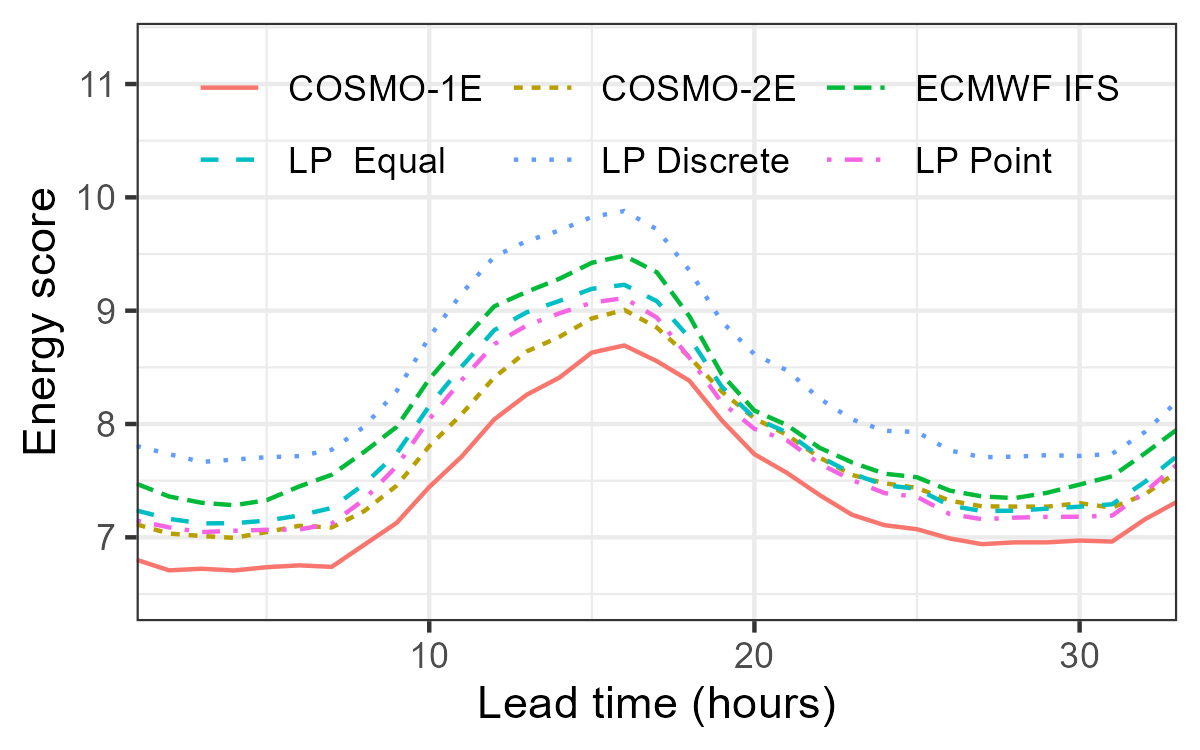}
    \caption{Multivariate}
    \end{subfigure}
    \caption{As in Figure \ref{fig:crps_vs_lead} but for the statistically post-processed predictive distributions.}
    \label{fig:crps_vs_lead_mbm}
\end{figure}

\begin{figure}
    \centering
    \includegraphics[width=0.32\linewidth]{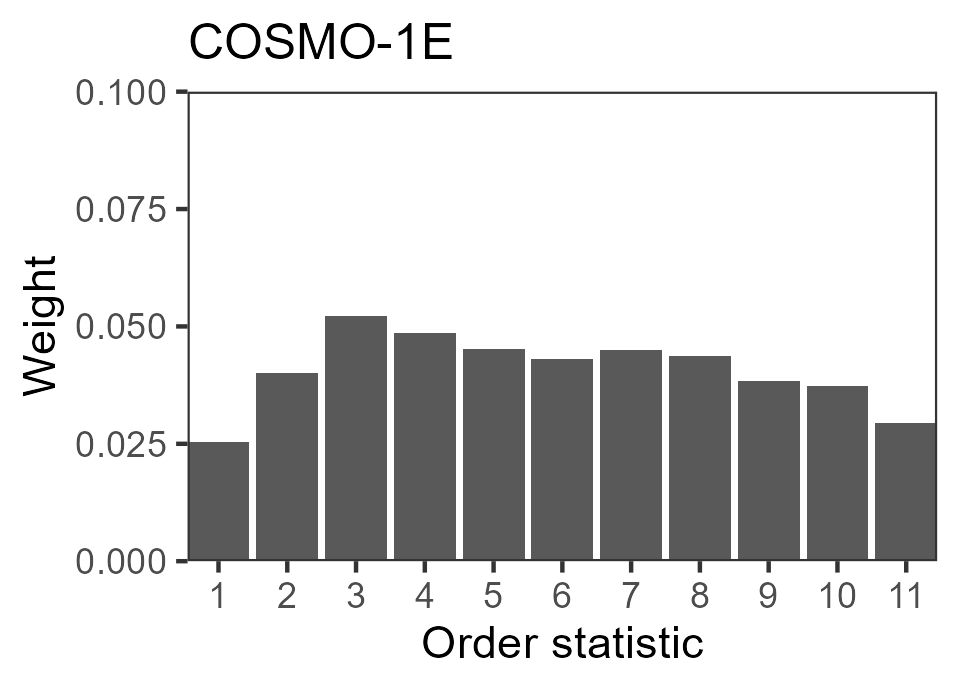}
    \includegraphics[width=0.32\linewidth]{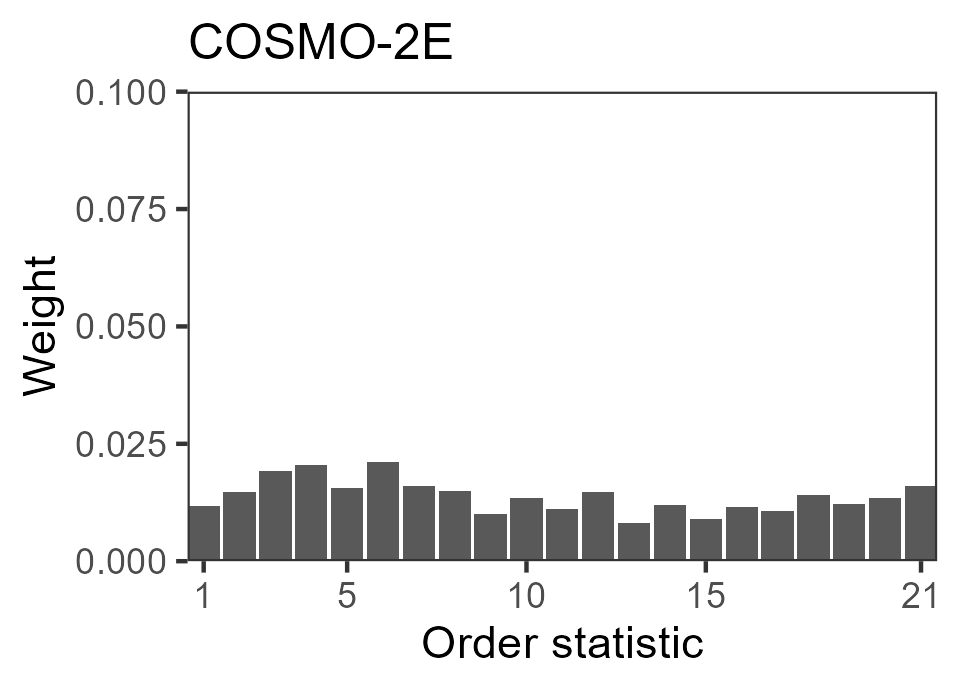}
    \includegraphics[width=0.32\linewidth]{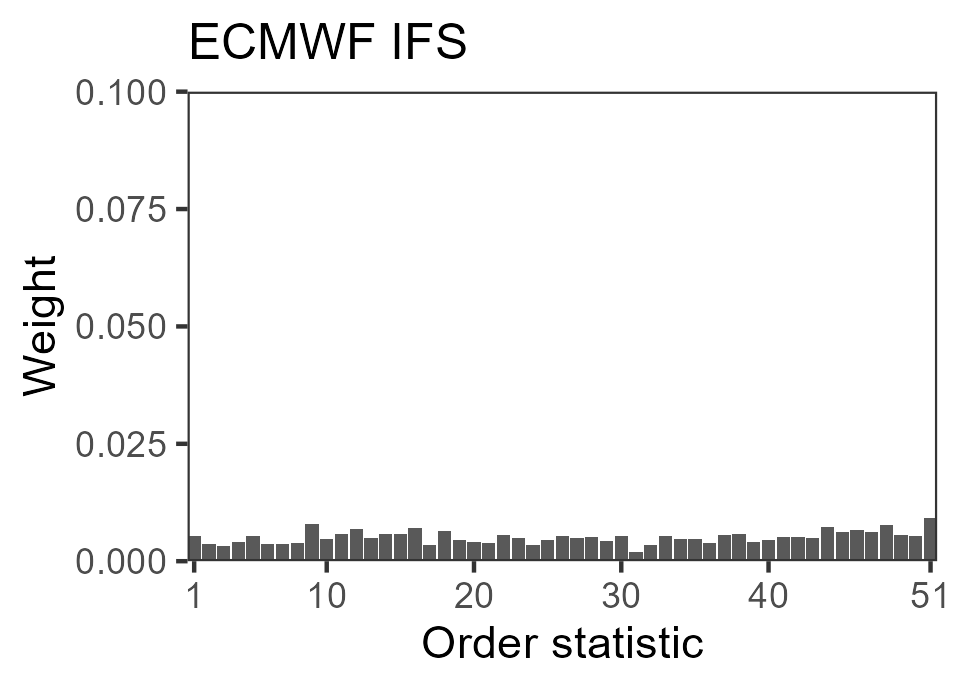}
    \caption{As in Figure \ref{fig:order_weights} but for the statistically post-processed predictive distributions.}
    \label{fig:order_weights_mbm}
\end{figure}

\begin{figure}
    \centering
    \includegraphics[width=0.5\linewidth]{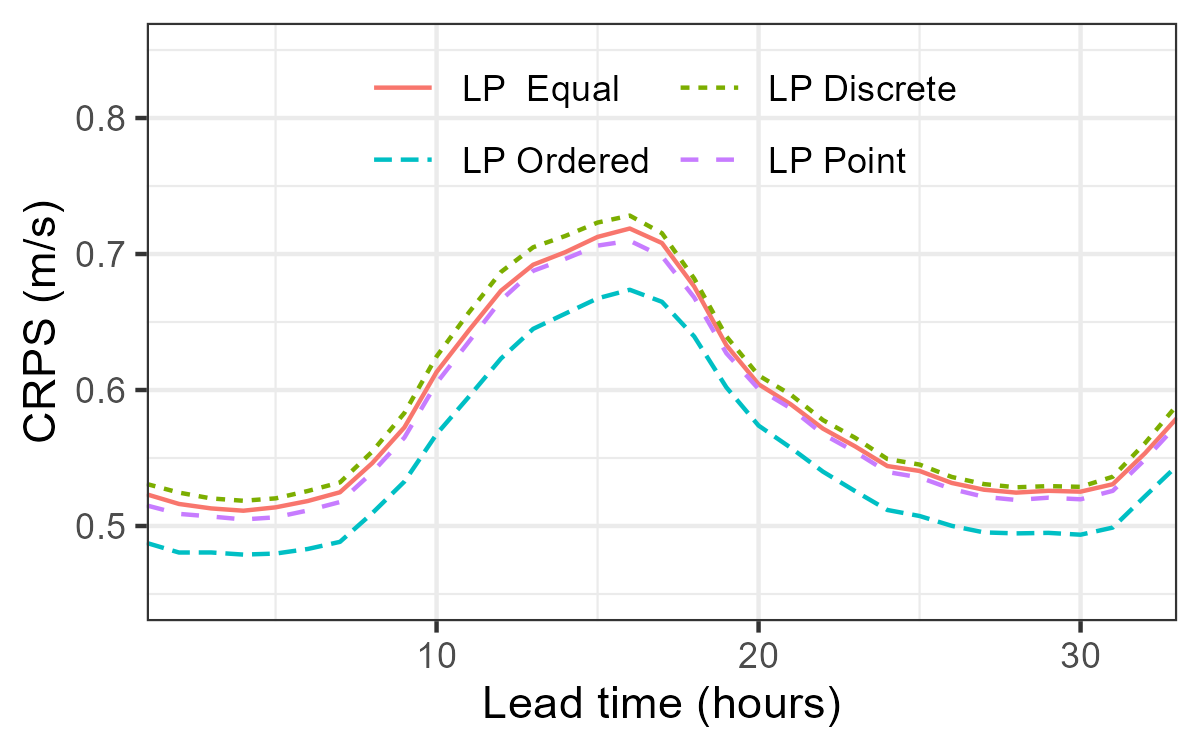}
    \caption{As in Figure \ref{fig:crpss_ordered} but for the statistically post-processed predictive distributions.}
    \label{fig:crpss_ordered_mbm}
\end{figure}

\begin{figure}
    \centering
    \begin{subfigure}{0.49\textwidth}
        \centering
        \includegraphics[width=0.49\linewidth]{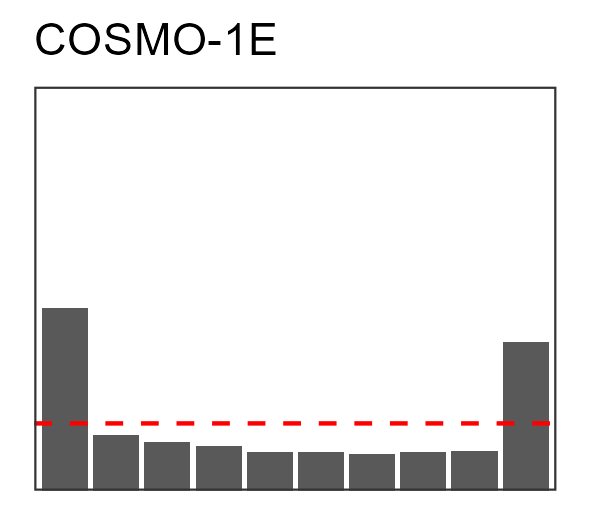}\\
        \includegraphics[width=0.49\linewidth]{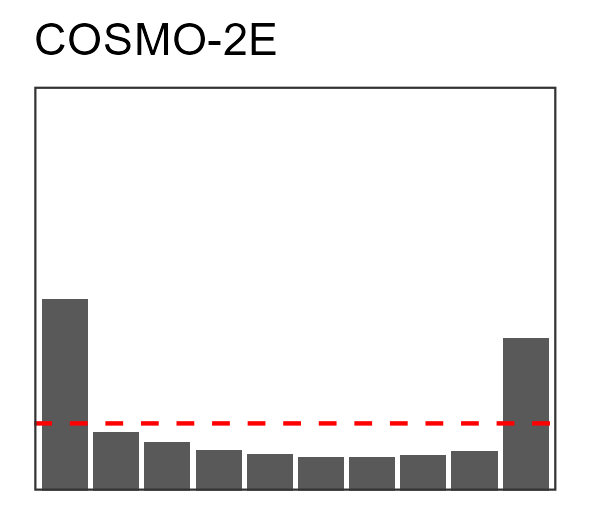}
        \includegraphics[width=0.49\linewidth]{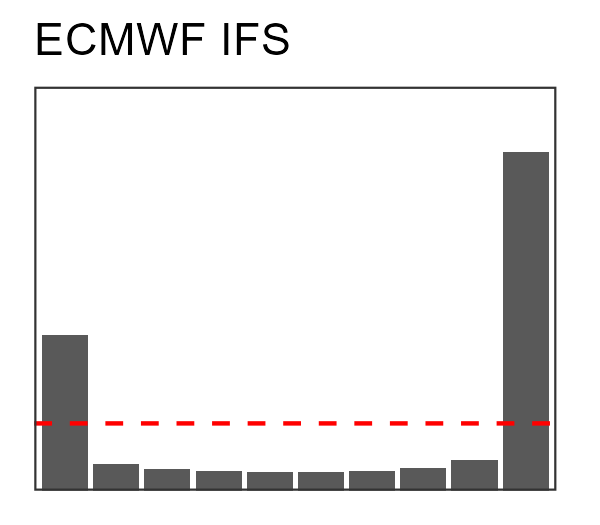}
        \includegraphics[width=0.49\linewidth]{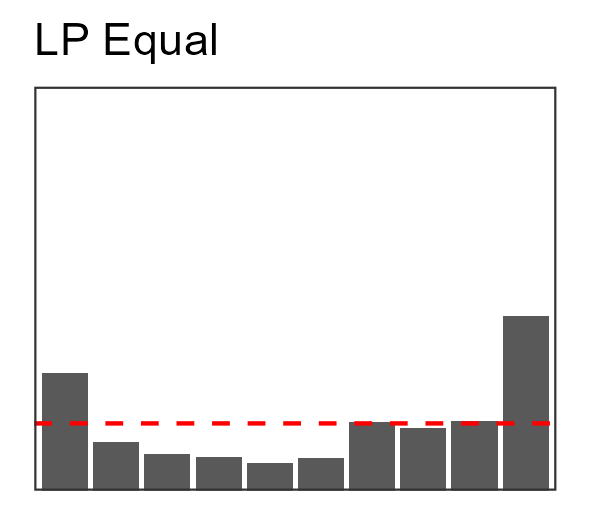}
        \includegraphics[width=0.49\linewidth]{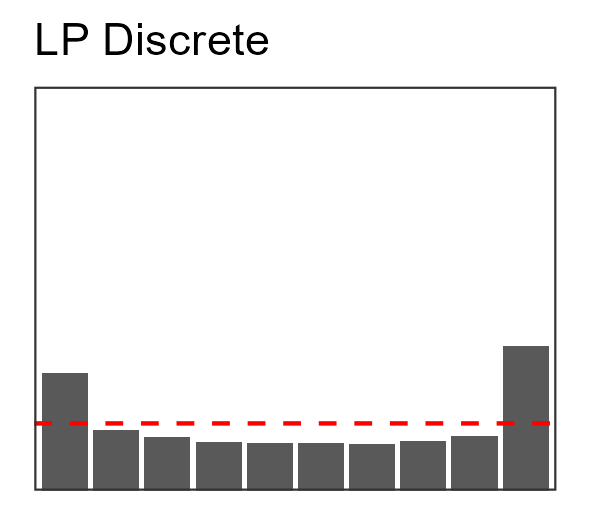}
        \includegraphics[width=0.49\linewidth]{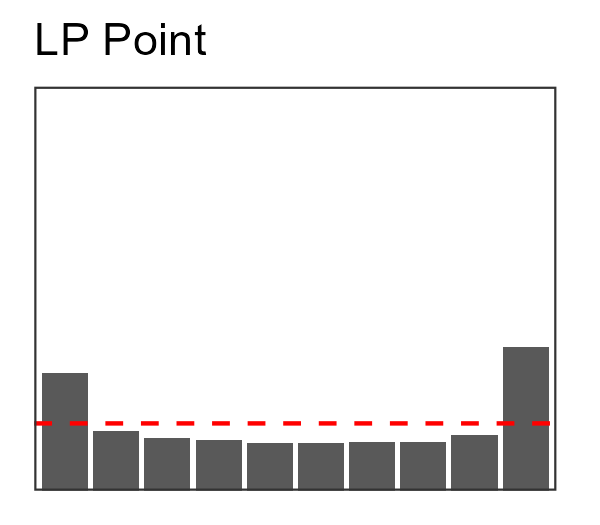}
        \includegraphics[width=0.49\linewidth]{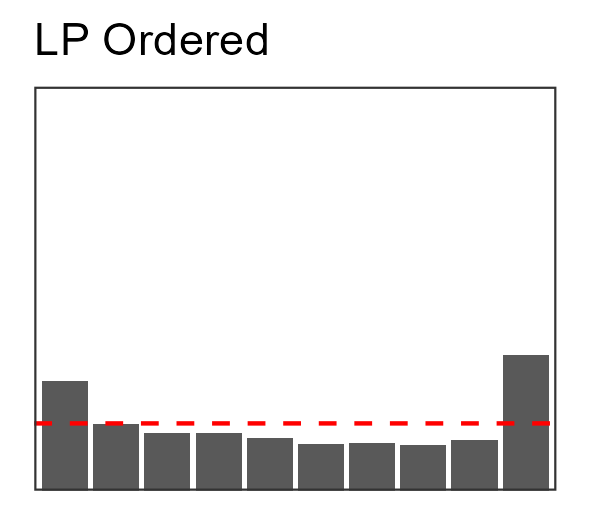}
    \caption{Raw forecast model output}
    \end{subfigure}
    \begin{subfigure}{0.49\textwidth}
        \centering
        \includegraphics[width=0.49\linewidth]{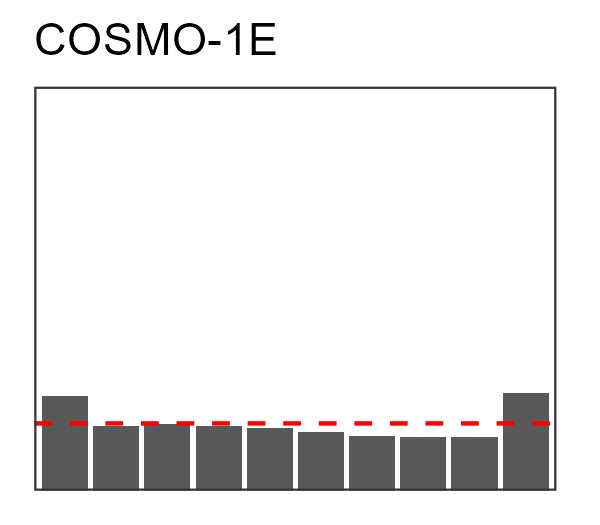}\\
        \includegraphics[width=0.49\linewidth]{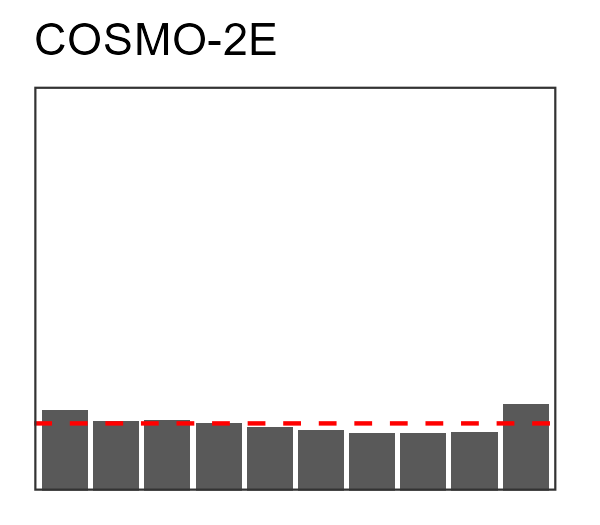}
        \includegraphics[width=0.49\linewidth]{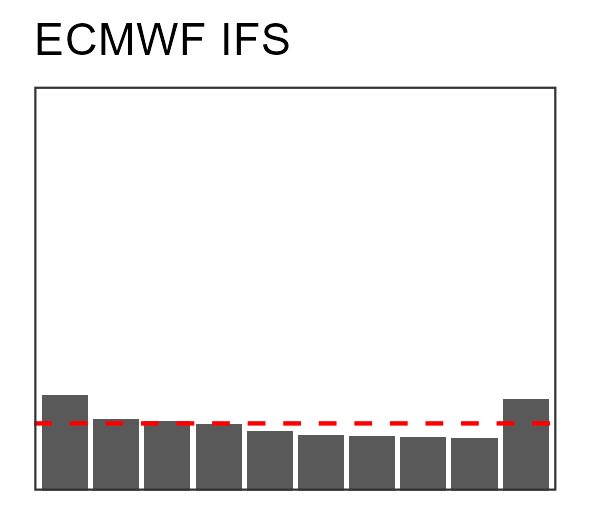}
        \includegraphics[width=0.49\linewidth]{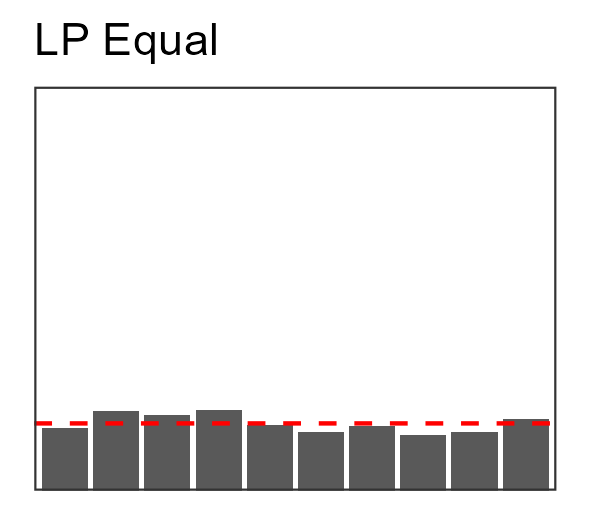}
        \includegraphics[width=0.49\linewidth]{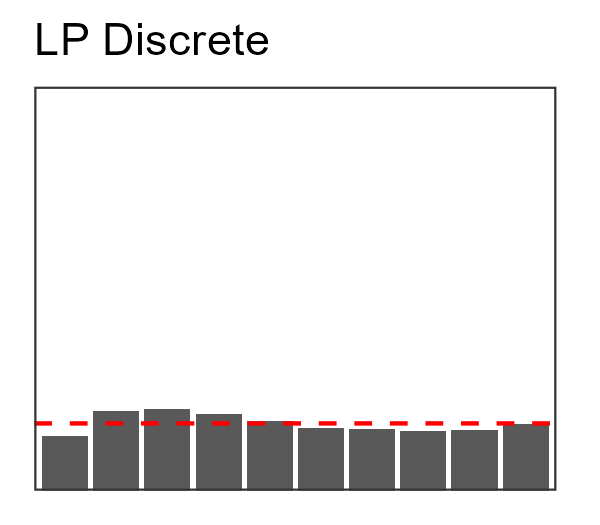}
        \includegraphics[width=0.49\linewidth]{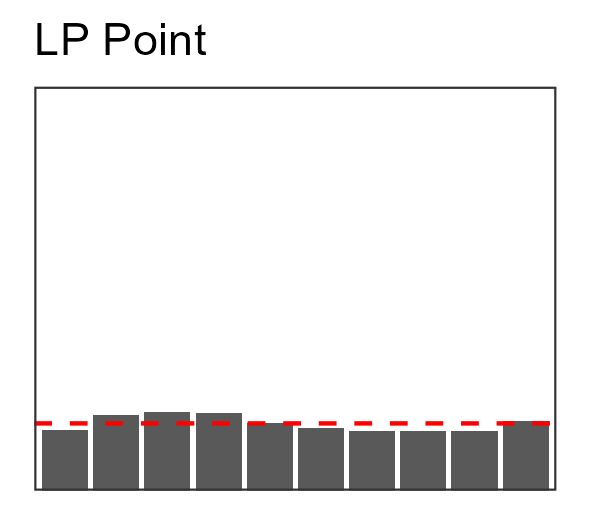}
        \includegraphics[width=0.49\linewidth]{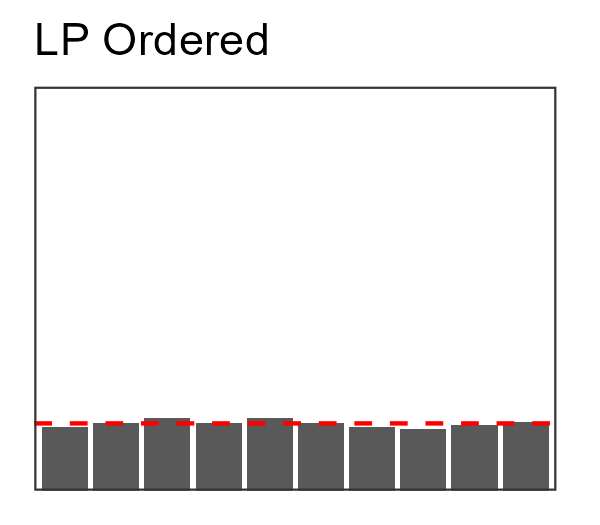}
    \caption{Statistically post-processed}
    \end{subfigure}
    \caption{Probability integral transform (PIT) histograms for the various forecast methods. Results are shown at a lead time of 18 hours, averaged across all stations. The horizontal dashed line has been added at the height of a flat histogram.}
    \label{fig:pit_hists}
\end{figure}

\end{document}